\documentclass{article}

\usepackage[final]{nips_2018}

\usepackage{tabularx}
\usepackage{arydshln}
\usepackage[utf8]{inputenc} %
\usepackage[T1]{fontenc}    %
\usepackage{hyperref}       %
\usepackage{url}            %
\usepackage{booktabs}       %
\usepackage{amsthm}
\usepackage{amsmath}
\usepackage{amsfonts}
\usepackage{nicefrac}       %
\usepackage{microtype}      %
\usepackage{wrapfig}
\usepackage{subcaption}

\usepackage{tikz}
\usetikzlibrary{arrows,automata}
\usepackage{array,multirow,graphicx}

\usepackage{algorithm}

\usepackage{algorithmic}
\title{Variational Inverse Control with Events: A General Framework for Data-Driven Reward Definition}

\newcommand{\bX}{\textbf{X}}
\newcommand{\bY}{\textbf{Y}}

\newcommand{\Or}{\textrm{ or }}

\newtheorem{theorem}{Theorem}[section]

\newtheorem{lemma}{Lemma}[section]
\newcommand{\repeatthanks}{\textsuperscript{\thefootnote}}

\author{
  Justin Fu\thanks{equal contribution} \quad Avi Singh\repeatthanks \quad Dibya Ghosh \quad Larry Yang \quad Sergey Levine \\
  University of California, Berkeley\\
  \texttt{\{justinfu, avisingh, dibyaghosh, larrywyang, svlevine\}@berkeley.edu} \\
}

\begin{document}

\maketitle

\begin{abstract}
The design of a reward function often poses a major practical challenge to real-world applications of reinforcement learning. Approaches such as inverse reinforcement learning attempt to overcome this challenge, but require expert demonstrations, which can be difficult or expensive to obtain in practice.
We propose variational inverse control with events (VICE), which generalizes inverse reinforcement learning methods to cases where full demonstrations are not needed, such as when only samples of desired goal states are available. Our method is grounded in an alternative perspective on control and reinforcement learning, where an agent's goal is to maximize the probability that one or more events will happen at some point in the future, rather than maximizing cumulative rewards. We demonstrate the effectiveness of our methods on continuous control tasks, with a focus on high-dimensional observations like images where rewards are hard or even impossible to specify.
\end{abstract}

\section{Introduction}

Reinforcement learning (RL) has shown remarkable promise in recent years, with results on a range of complex tasks such as robotic control~\citep{Levine16} and playing video games~\citep{Mnih2015} from raw sensory input. 
RL algorithms solve these problems by learning a policy that maximizes a reward function that is considered as part of the problem formulation. There is little practical guidance that is provided in the theory of RL about how these rewards should be designed. However, the design of the reward function is in practice critical for good results, and reward misspecification can easily cause unintended behavior~\citep{Amodei16}. For example, a vacuum cleaner robot rewarded to pick up dirt could exploit the reward by repeatedly dumping dirt on the ground and picking it up again~\citep{RussellNorvigAI}. Additionally, it is often difficult to write down a reward function at all. For example, when learning policies from high-dimensional visual observations, practitioners often resort to using motion capture~\citep{jason17} or specialized computer vision systems~\citep{progressive} to obtain rewards.

As an alternative to reward specification, imitation learning~\citep{Argall09} and inverse reinforcement learning \citep{Ng2000} instead seek to mimic expert behavior. However, such approaches require an expert to show \emph{how} to solve a task. We instead propose a novel problem formulation, variational inverse  control with events (VICE), which generalizes inverse reinforcement learning to alternative forms of expert supervision. In particular, we consider cases when we have examples of a desired final outcome, rather than full demonstrations, so the expert only needs to show \emph{what} the desired outcome of a task is (see Figure ~\ref{fig:splash}). A straightforward way to make use of these desired outcomes is to train a classifier~\citep{Pinto16,Tung18} to distinguish desired and undesired states. However, for these approaches it is unclear how to correctly sample negatives and whether using this classifier as a reward will result in the intended behavior, since an RL agent can learn to exploit the classifier, in the same way it can exploit human-designed rewards. Our framework provides a more principled approach, where classifier training corresponds to learning probabilistic graphical model parameters (see Figure~\ref{fig:splash_2}), and policy optimization corresponds to inferring the optimal actions. By selecting an inference query which corresponds to our intentions, we can mitigate reward hacking scenarios similar to those previously described, and also specify the task with examples rather than manual engineering.

\begin{wrapfigure}{r}{0.5\textwidth}
\centering
\vspace{-0.3cm}
\includegraphics[width=0.4\textwidth]{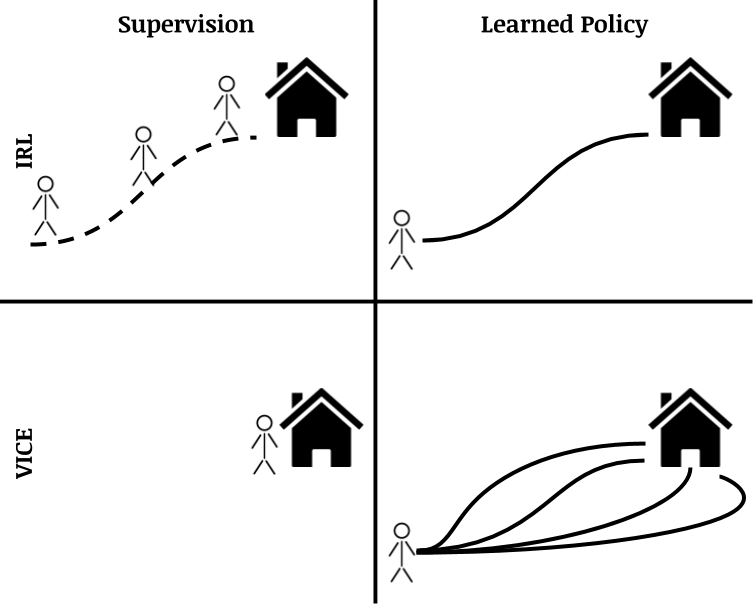}
\vspace{-0.3cm}
\caption{
\label{fig:splash} \small 
Standard IRL requires full expert demonstrations and aims to produce an agent that mimics the expert. VICE generalizes IRL to cases where we only observe final desired outcomes, which does not require the expert to actually know \emph{how} to perform the task.}
\vspace{-0.15in}
\end{wrapfigure}

Our inverse formulation is based on a corresponding forward control framework which reframes control as inference in a graphical model. Our framework resembles prior work~\citep{Kappen09,Toussaint09, Rawlik12}, but we extend this connection by replacing the conventional notion of rewards with event occurence variables. Rewards correspond to log-probabilities of events, and value functions can be interpreted as backward messages that represent log-probabilities of those events occurring. This framework retains the full expressivity of RL, since any rewards can be expressed as log-probabilities, while providing more intuitive guidance on task specification. It further allows us to express various intentions, such as for an event to happen at least once, exactly once at any time step, or once at a specific timestep. Crucially, our framework does not require the agent to \emph{observe} the event happening, but only to know the probability that it occurred. While this may seem unusual, it is more practical in the real world, where success may be determined by probabilistic models that themselves carry uncertainty. For example, the previously mentioned vacuum cleaner robot needs to estimate from its observations whether its task has been accomplished and would never receive direct feedback from the real world whether a room is clean. %

\begin{wrapfigure}{r}{0.5\textwidth}
\centering
\includegraphics[width=0.5\textwidth]{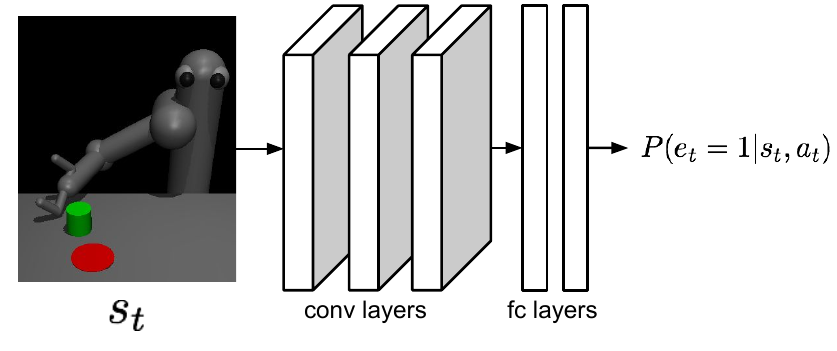}
\vspace{-0.6cm}
\caption{
\label{fig:splash_2} \small 
Our framework learns event probabilities from data. We use neural networks as function approximators to model this distribution, which allows us to work with high dimensional observations like images.}
\vspace{-0.20in}
\end{wrapfigure}

Our contributions are as follows. We first introduce the event-based control framework by extending previous control as inference work to alternative queries which we believe to be useful in practice. This view on control can ease the process of reward engineering by mapping a user's intention to a corresponding inference query in a probabilistic graphical model. Our experiments demonstrate how different queries can result in different behaviors which align with the corresponding intentions. 
We then propose methods to learn event probabilities from data, in a manner analogous to inverse reinforcement learning. This corresponds to the use case where designing event probabilities by hand is difficult, but observations (e.g., images) of successful task completion are easier to provide. This approach is substantially easier to apply in practical situations, since full demonstrations are not required. %
Our experiments demonstrate that our framework can be used in this fashion for policy learning from high dimensional visual observations where rewards are hard to specify. Moreover, our method substantially outperforms baselines such as sparse reward RL, indicating that our framework provides an automated shaping effect when learning events, making it feasible to solve otherwise hard tasks.

\section{Related work}
Our reformulation of RL is based on the connection between control and inference~\citep{Kappen09,Ziebart10,Rawlik12}. The resulting problem is sometimes referred to as maximum entropy reinforcement learning, or KL control. Duality between control and inference in the case of linear dynamical systems has been studied in ~\citet{Kalman60,Todorov08}. Maximum entropy objectives can be optimized efficiently and exactly in linearly solvable MDPs~\citep{Todorov07} and environments with discrete states. In linear-quadratic systems, control as inference techniques have been applied to solve path planning problems for robotics~\citep{Toussaint09}. In the context of deep RL, maximum entropy objectives have been used to derive soft variants of Q-learning and policy gradient algorithms~\citep{Haarnoja2017,Schulman17,ODonoghue16,Nachum17}. These methods embed the standard RL objective, formulated in terms of rewards, into the framework of probabilistic inference. In contrast, we aim specifically to reformulate RL in a way that does not require specifying arbitrary scalar-valued reward functions.

In addition to studying inference problems in a control setting, we also study the problem of learning event probabilities in these models. This is  related to prior work on inverse reinforcement learning (IRL), which has also sought to cast learning of objectives into the framework of probabilistic models~\citep{Ziebart08,Ziebart10}. As explained in Section~\ref{sec:learning_events}, our work generalizes IRL to cases where we only provide examples of a desired outcome or goal, which is significantly easier to provide in practice since we do not need to know how to achieve the goal.

Reward design is crucial for obtaining the desired behavior from RL agents~\citep{Amodei16}. \citet{Ng2000} showed that rewards can be modified, or shaped, to speed up learning without changing the optimal policy. \citet{Singh10} study the problem of optimal reward design, and introduce the concept of a fitness function. They observe that a proxy reward that is distinct from the fitness function might be optimal under certain settings, and \citet{Sorg10} study the problem of how this optimal proxy reward can be selected. \citet{Hadfield-Menell17} introduce the problem of inferring the true objective based on the given reward and MDP. Our framework aids task specification by introducing two decisions: the selection of the inference query that is of interest (i.e., when and how many times should the agent cause the event?), and the specification of the event of interest. Moreover, as discussed in Section~\ref{sec:experiments}, we observe that our method automatically provides a reward shaping effect, allowing us to solve otherwise hard tasks. %

\section{Preliminaries}

In this section we introduce our notation and summarize how control can be framed as inference. Reinforcement learning operates on Markov decision processes (MDP), defined by the tuple $(\mathcal{S}, \mathcal{A}, \mathcal{T}, r, \gamma, \rho_0)$. $\mathcal{S}, \mathcal{A}$ are the state and action spaces, respectively, $r$ is a reward function, which is typically taken to be a scalar field on $\mathcal{S} \times \mathcal{A}$, and $\gamma \in (0, 1)$ is the discount factor. $\mathcal{T}$ and $\rho_0$ represent the dynamics and initial state distributions, respectively. %

\subsection{Control as inference}
\label{sec:graphical_model_control}

\begin{wrapfigure}{r}{0.5\textwidth}

\vspace{-0.05in}

\centering
\begin{tikzpicture}[->,auto,node distance=1.3cm,]
\node[state](S1){$s_1$};
\node[state](S2)[right of=S1]{$s_2$};
\node[state](S3)[right of=S2]{$s_3$};
\node[state](A1)[below of=S1]{$a_1$};
\node[state](A2)[below of=S2]{$a_2$};
\node[state](A3)[below of=S3]{$a_3$};
\node[state](E1)[below of=A1,fill=gray!30]{$e_1$};
\node[state](E2)[below of=A2,fill=gray!30]{$e_2$};
\node[state](E3)[below of=A3,fill=gray!30]{$e_3$};
\node(DOTS)[right of=S3]{\ldots};
\node(DOTA)[right of=A3]{\ldots};
\node(DOTE)[right of=E3]{\ldots};
\node[state](ST)[right of=DOTS]{$s_T$};
\node[state](AT)[below of=ST]{$a_T$};
\node[state](ET)[below of=AT,fill=gray!30]{$e_T$};

\path (S1) edge node {} (S2)
           edge node {} (A1)
           edge [bend right] node {} (E1)
      (A1) edge node {} (S2)
           edge node {} (E1)
      (S2) edge node {} (S3)
           edge node {} (A2)
           edge [bend right] node {} (E2)
      (A2) edge node {} (S3)
           edge node {} (E2)
      (S3) edge node {} (A3)
           edge node {} (DOTS)
           edge [bend right] node {} (E3)
      (A3) edge node {} (E3)
           edge node {} (DOTS)
      (DOTS) edge node {} (ST)
      (DOTA) edge node {} (ST)
      (ST) edge node {} (AT)
           edge [bend right] node {} (ET)
      (AT) edge node {} (ET)
          ;
\end{tikzpicture}
\caption{ \small
\label{fig:controlgraph}
A graphical model framework for control. In maximum entropy reinforcement learning, we observe $e_{1:T}=1$ and can perform inference on the trajectory to obtain a policy.}
\vspace{-0.15in}
\end{wrapfigure}
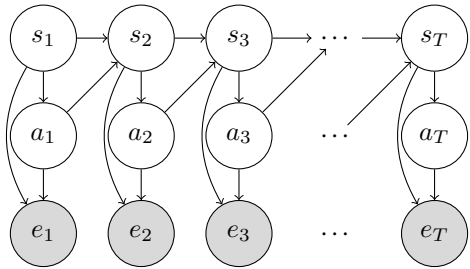

In order to cast control as an inference problem, we begin with the standard graphical model for an MDP, which consists of states and actions. We incorporate the notion of a goal with an additional variable $e_t$ that depends on the state (and possibly also the action) at time step $t$, according to $p(e_t | s_t, a_t)$. If the goal is specified with a reward function, we can define $p(e_t=1|s_t, a_t) = e^{r(s,a)}$ which, as we discuss below, leads to a maximum entropy version of the standard RL framework. This requires the rewards to be negative, which is not restrictive in practice, since if the rewards are bounded we can re-center them so that the maximum value is 0. The structure of this model is presented in Figure~\ref{fig:controlgraph}, and is also considered in prior work, as discussed in the previous section.

The maximum entropy reinforcement learning objective emerges when we condition on $e_{1:T}=1$. Consider computing a backward message $\beta(s_t, a_t) = p(e_{t:T}=1|s_t, a_t)$. Letting $Q(s_t, a_t) = \log \beta(s_t, a_t)$, notice that the backward messages encode the backup equations
\begin{align*}
Q(s_t, a_t) = r(s_t, a_t) + \log E_{s_{t+1}}[e^{V(s_{t+1})}]
\ \ \ \ \ \ \ \ \ \ \ \ 
V(s_t) = \log \int_{a\in\mathcal{A}} e^{Q(s_t,a)}da
\ \ .
\end{align*}
We include the full derivation in Appendix~\ref{app:event_derivations_mprl}, which resembles derivations discussed in prior work~\citep{Ziebart08}.
This backup equation corresponds to maximum entropy RL, and is equivalent to soft Q-learning and causal entropy RL formulations in the special case of deterministic dynamics~\citep{Haarnoja2017,Schulman17}. For the case of stochastic dynamics, maximum-entropy RL is optimistic with respect to the dynamics and produces risk-seeking behavior, and we refer the reader to Appendix~\ref{sec:variational_inf}, which covers a variational derivation of the policy objective which properly handles stochastic dynamics.  %

\section{Event-based control}
\label{sec:event_based_control}
In control as inference, we chose \mbox{$\log p(e_t=1|s_t,a_t) = r(s,a)$} so that the resulting inference problem matches the maximum entropy reinforcement learning objective. However, we might also ask: what does the variable $e_t$, and its probability, represent? The connection to graphical models lets us interpret rewards as the log-probability that an event occurs, and the standard approach to reward design can also be viewed as specifying the probability of some binary event, that we might call an optimality event. %
This provides us with an alternative way to think about task specification: rather than using arbitrary scalar fields as rewards, we can specify the events for which we would like to maximize the probability of occurrence.

We now outline inference procedures for different types of problems of interest in the graphical model depicted in Figure~\ref{fig:controlgraph}. In Section~\ref{sec:learning_events}, we will discuss learning procedures in this graphical model which allow us to specify objectives from data. The strength of the events framework for task specification lies in both its intuitive interpretation and flexibility: though we can obtain similar behavior in standard reinforcement learning, it may require considerable reward tuning and changes to the overall problem statement, including the dynamics. In contrast, events provides a single unified framework where the problem parameters remain unchanged, and we simply ask the appropriate queries. We will discuss:
\begin{itemize}
\item \textbf{ALL} query: $p(\tau | e_{1:T}=1)$, meaning the event should happen at each time step.
\item \textbf{AT} query: $p(\tau | e_{t^*}=1)$, meaning the event should happen at a specific time $t^*$.
\item \textbf{ANY} query: $p(\tau | e_1=1 \Or e_2=1 \Or ...\  e_T=1)$ meaning the event should happen on at least one time step during each trial.
\end{itemize}
We present two derivations for each query: a conceptually simple one based on maximum entropy and message passing (see Section~\ref{sec:graphical_model_control}), and one based on variational inference, (see Appendix~\ref{sec:variational_inf}), which is more appropriate for stochastic dynamics. The resulting variational objective is of the form: 
\[ J(\pi) = -D_{KL}(\pi(\tau)||p(\tau|\textrm{evidence})) = E_{s_{1:T},a_{1:T} \sim q}[\hat{Q}(s_{1:T},a_{1:T})+H^\pi(\cdot|s_{1:T}) ], \]
where $\hat{Q}$ is an empirical Q-value estimator for a trajectory and $H^\pi(\cdot|s_{1:T})= -\sum_{t=0}^T \log \pi(a_t|s_t)$ represents the entropy of the policy. This form of the objective can be used in policy gradient algorithms, and in special cases can also be written as a recursive backup equation for dynamic programming algorithms.
We directly present our results here, and present more detailed derivations (including extensions to discounted cases) in Appendices~\ref{app:event_derivations_mp} and ~\ref{app:event_derivations_causal}.

\subsection{ALL and AT queries}

We begin by reviewing the ALL query, when we wish for an agent to trigger an event at every timestep. This can be useful, for example, when expressing some continuous task such as maintaining some sort of configuration (such as balancing on a unicycle) or avoiding an adverse outcome, such as not causing an autonomous car to collide. As covered in Section~\ref{sec:graphical_model_control}, conditioning on the event at all time steps mathematically corresponds to the same problem as entropy maximizing RL, with the reward given by $\log p(e_t=1|s_t, a_t)$.

\begin{theorem}[ALL query]
In the ALL query, the message passing update for the Q-value can be written as:
\[
Q(s_t, a_t) = \log p(e_t=1|s_t, a_t) + \log E_{s_{t+1}}[e^{V(s_{t+1})}],
\]
where $Q(s_t, a_t)$ represents the log-message $\log p(e_{t:T}=1|s_t, a_t)$. The corresponding empirical Q-value can be written recursively as:
\[\hat{Q}(s_{t:T},a_{t:T}) = \log p(e_t=1|s_t, a_t) + \hat{Q}(s_{t+1:T}, a_{t+1:T}).\]
\end{theorem}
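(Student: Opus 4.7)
The plan is to derive both equations directly from the definition of the backward message $\beta(s_t,a_t)=p(e_{t:T}=1\mid s_t,a_t)$ using the conditional independence structure of the graphical model in Figure~\ref{fig:controlgraph}. The first step is to split the joint event $e_{t:T}=1$ into $e_t=1$ and $e_{t+1:T}=1$; by the graphical model, $e_t$ depends only on $(s_t,a_t)$ and the remaining events $e_{t+1:T}$ are conditionally independent of $(s_t,a_t)$ given $s_{t+1}$. Marginalizing over $s_{t+1}$ therefore yields
\[
\beta(s_t,a_t) = p(e_t=1\mid s_t,a_t)\,\mathbb{E}_{s_{t+1}\sim p(\cdot\mid s_t,a_t)}\bigl[p(e_{t+1:T}=1\mid s_{t+1})\bigr].
\]

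Next, I would express $p(e_{t+1:T}=1\mid s_{t+1})$ in terms of $\beta(s_{t+1},a_{t+1})$ by integrating out the action using a uniform action prior (the assumption built into the control-as-inference reduction referenced in Section~\ref{sec:graphical_model_control}). Writing $V(s_{t+1})=\log\int e^{Q(s_{t+1},a)}\,da$, this integration produces $e^{V(s_{t+1})}$ up to the constant action-prior normalizer, which is standard to absorb. Taking logarithms of the factorization then gives
\[
Q(s_t,a_t) = \log p(e_t=1\mid s_t,a_t) + \log\mathbb{E}_{s_{t+1}}\!\bigl[e^{V(s_{t+1})}\bigr],
\]
which is the first claim. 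This is essentially the same computation cited in Section~\ref{sec:graphical_model_control}, so the appeal here is mainly to invoke the already-established backup for the ALL setting.

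For the empirical Q-value $\hat Q(s_{t:T},a_{t:T})$, I would argue by induction on the tail length: at the terminal step $\hat Q(s_T,a_T)=\log p(e_T=1\mid s_T,a_T)$ by definition, and for intermediate $t$ the trajectory-conditioned log-probability of $e_{t:T}=1$ factorizes, since given the full state-action trajectory the events are independent across time. Hence $\log p(e_{t:T}=1\mid s_{t:T},a_{t:T})=\sum_{k=t}^{T}\log p(e_k=1\mid s_k,a_k)$, which immediately rearranges into the claimed recursion.

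The routine bookkeeping I expect to be the only real friction is handling the uniform action prior carefully so that the $V$ in the log-expectation matches the normalization used throughout the paper; once that convention is fixed, both statements follow directly from the factorization of the graphical model and an induction on $t$. No fundamentally new argument beyond the derivation sketched in Appendix~\ref{app:event_derivations_mprl} is required.
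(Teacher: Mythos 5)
Your proposal matches the paper's own derivation: the message-passing part is exactly the computation in Appendix~\ref{app:event_derivations_mprl} (factorize $p(e_{t:T}=1\mid s_t,a_t)$ via conditional independence, integrate out the action under a uniform prior, absorb the normalizer into $V$), and the empirical recursion is the same telescoping of $\sum_{t'=t}^{T}\log p(e_{t'}=1\mid s_{t'},a_{t'})$ that appears in Appendix~\ref{app:event_derivations_causal_all}. The only cosmetic difference is that you obtain the empirical $\hat Q$ directly from the trajectory-conditioned factorization rather than via the variational KL objective, but since the theorem only asserts the recursive form, this is equivalent and complete.
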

\begin{proof} 
See Appendices~\ref{app:event_derivations_mp_all} and ~\ref{app:event_derivations_causal_all}
\end{proof}

The AT query, or querying for the event at a specific time step, results in the same equations, except $\log p(e=1|s_t, a_t)$, is only given at the specified time $t^*$. %
While we generally believe that the ANY query presented in the following section will be more broadly applicable, there may be scenarios where an agent needs to be in a particular configuration or location at the end of an episode. In these cases, the AT query would be the most appropriate.

\subsection{ANY query}
\label{sec:query_any}

The ANY query specifies that an event should happen at least once before the end of an episode, without regard for when in particular it takes place. Unlike the ALL and AT queries, the ANY query does not correspond to entropy maximizing RL and requires a new backup equation. It is also in many cases more appropriate: if we would like an agent to accomplish some goal, we might not care when in particular that goal is accomplished, and we likely don't need it to accomplish it more than once. This query can be useful for specifying behaviors such as reaching a goal state, completion of a task, etc. Let the stopping time $t^* = \min\{t\ge 0 | e_t=1\}$ denote the first time that the event occurs.

\begin{theorem}[ANY query]
In the ANY query, the message passing update for the Q-value can be written as:
\begin{align*}
&Q(s_t, a_t) =
\log\left( p(e_t=1|s_t, a_t) + p(e_t=0|s_t,a_t)E_{s_{t+1}}[e^{V(s_{t+1})}] \right)
\end{align*}
where $Q(s_t, a_t)$ represents the log-message $\log p(t \le t^* \le T|s_t, a_t)$. The corresponding empirical Q-value can be written recursively as:
\begin{align*}
&\hat Q(s_{t:T}, a_{t:T}) =
\log\left( p(e_t=1|s_t, a_t) + p(e_t=0|s_t,a_t)e^{\hat{Q}(s_{t+1:T},a_{t+1:T})} \right).
\end{align*}

\end{theorem}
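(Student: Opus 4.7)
The plan is to derive the message-passing recursion by direct probabilistic reasoning on the graphical model, and then to obtain the empirical (trajectory-level) form by specializing the variational objective outlined in Section~\ref{sec:event_based_control}. Throughout I will use the convention from the theorem that the Q-value encodes the backward message $\log p(t \le t^* \le T \mid s_t, a_t)$, i.e.\ the log-probability that the event occurs at least once from time $t$ to $T$, given the current state-action pair.

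The first and main step is the recursion for $\beta(s_t,a_t) := p(t \le t^* \le T \mid s_t,a_t)$. By the law of total probability, partition on the binary value of $e_t$:
\begin{align*}
\beta(s_t,a_t) &= p(e_t=1\mid s_t,a_t)\, p(t \le t^* \le T \mid s_t,a_t,e_t=1) \\
&\quad + p(e_t=0\mid s_t,a_t)\, p(t+1 \le t^* \le T \mid s_t,a_t,e_t=0).
\end{align*}
The first conditional probability equals $1$ because $e_t=1$ immediately realizes $t^*=t$. For the second term, I use the graphical-model Markov structure: conditioned on $(s_t,a_t,e_t=0)$, the distribution of $(s_{t+1},a_{t+1},\ldots)$ factors through $p(s_{t+1}\mid s_t,a_t)$ and is independent of $e_t$. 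Marginalizing over $s_{t+1}$ and then over $a_{t+1}$ with the prior action distribution (which is absorbed into the definition of $V$ via $V(s_{t+1})=\log\int e^{Q(s_{t+1},a)}\,da$, exactly as in Section~\ref{sec:graphical_model_control}) yields $E_{s_{t+1}}[\,e^{V(s_{t+1})}\,]$. Taking logs produces the stated message-passing update.

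For the empirical Q-value, I would follow the variational template summarized in the excerpt: interpret $\hat Q(s_{t:T},a_{t:T})$ as the per-trajectory log-evidence $\log p(e_t=1 \text{ or } \cdots \text{ or } e_T=1 \mid s_{t:T},a_{t:T})$. Conditioning on the realized future trajectory removes the $E_{s_{t+1}}$ expectation (which in the message-passing view comes from marginalizing unknown dynamics), and the same case split $e_t=1$ versus $e_t=0$ then gives
\[
p(t \le t^* \le T \mid s_{t:T},a_{t:T}) = p(e_t=1\mid s_t,a_t) + p(e_t=0\mid s_t,a_t)\, p(t+1 \le t^* \le T \mid s_{t+1:T},a_{t+1:T}),
\]
and taking logs gives the claimed recursion for $\hat Q$. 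Base cases at $t=T$ follow because $\hat Q(s_T,a_T) = \log p(e_T=1\mid s_T,a_T)$, which is consistent with both recursions when we interpret the empty future as having event-probability $0$.

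The main obstacle is the nonlinearity of the recursion: unlike the ALL query, where the $\log$ distributes additively and recovers the familiar soft-Bellman backup, here the disjunction forces a $\log(a+b)$ form that cannot be collapsed to a sum of log-potentials. Consequently one must be careful in two places: (i) arguing that the $e_t=0$ branch correctly rescales the recursive message by $p(e_t=0\mid s_t,a_t)$ (rather than treating $e_t=0$ as a standard conditioning event that would divide instead of multiply), and (ii) defining $t^*$ so that on the $e_t=1$ branch the event is indeed realized at time $t$ and on the $e_t=0$ branch it must occur strictly later. Both subtleties are handled by the explicit disjunction formulation of the query; once they are pinned down, the message-passing and empirical recursions follow directly from the law of total probability and the Markov structure of the graphical model in Figure~\ref{fig:controlgraph}.
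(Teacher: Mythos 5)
Your proposal is correct and follows essentially the same route as the paper: a backward recursion on $p(t^* \in [t,T] \mid s_t,a_t)$ obtained by splitting on $e_t$ and using the conditional independence of the future from $e_t$ given $(s_t,a_t)$, followed by the trajectory-conditional analogue for $\hat{Q}$. The only cosmetic difference is that the paper reaches $p(e_t=1) + p(e_t=0)\,p(t^*\in[t+1,T])$ via inclusion--exclusion on the disjunction rather than your law-of-total-probability case split, and derives the empirical recursion from the explicit sum $\sum_t p(e_t=1\mid s_t,a_t)\prod_{t'<t}p(e_{t'}=0\mid s_{t'},a_{t'})$; these are algebraically equivalent to your steps.
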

\begin{proof} 
See Appendices~\ref{app:event_derivations_mp_any} and ~\ref{app:event_derivations_causal_any}
\end{proof}

This query is related to first-exit RL problems, where an agent receives a reward of 1 when a specified goal is reached and is immediately moved to an absorbing state but it does not require the event to actually be observed, which makes it applicable to a variety of real-world situations that have uncertainty over the goal. The backup equations of the ANY query are equivalent to the first-exit problem when $p(e|s,a)$ is deterministic. This can be seen by setting $p(e=1|s,a)=r_{F}(s,a)$, where $r_F(s,a)$ is an goal indicator function that denotes the reward of the first-exit problem. In this case, we have $Q(s,a)=0$ if the goal is reachable, and $Q(s,a) = -\infty$ if not. In the first-exit case, we have $Q(s,a)=1$ if the goal is reachable and $Q(s,a)=0$ if not - both cases result in the same policy.

\subsection{Sample-based optimization using policy gradients}
\label{sec:pg}

In small, discrete settings with known dynamics, we can use the backup equations in the previous section to solve for optimal policies with dynamic programming. For large problems with unknown dynamics, we can also derive model-free analogues to these methods, and apply them to complex tasks with high-dimensional function approximators. One commonly used method is the policy gradient, and which we can derive via logarithmic differentiation as:
\begin{align*}
\nabla_\theta J(\theta) &= - \nabla_\theta D_{KL}(\pi_\theta(\tau)||p(\tau|\textrm{evidence})) \\
&= E_{s_{1:T},a_{1:T} \sim \pi_\theta}\left[\sum_{t=1}^T \nabla \log\pi_\theta(a_t|s_t) (\hat{Q}(s_{1:T},a_{1:T}) + H^\pi(\cdot|s_{t:T}))\right]
\end{align*}
Under certain assumptions we can replace $\hat{Q}(s_{1:T},a_{1:T})$ with $\hat{Q}(s_{t:T},a_{t:T})$ to obtain an estimator which only depends on future returns. See Appendix~\ref{app:policy_gradient} for further explanation.

This estimator can be integrated into standard policy gradient algorithms, such as TRPO~\cite{Schulman15}, to train expressive inference models using neural networks. Extensions of our approach to other RL methods with function approximation, such as Q-learning and approximate dynamic programming, can also be derived from the backup equations, though this is outside the scope of the present work.

\section{Learning event probabilities from data}
\label{sec:learning_events}

In the previous section, we presented a control framework that operates on events rather than reward functions, and discussed how the user can choose from among a variety of inference queries to obtain a desired outcome. However, the event probabilities must still be obtained in some way, and may be difficult to hand-engineer in many practical situations - for example, an image-based deep RL system may need an image classifier to determine if it has accomplished its goal. In such situations, we can ask the user to instead supply examples of states or observations where the event has happened, and learn the event probabilities $p_\theta(e=1|s,a)$. Inverse reinforcement learning corresponds to the case when we assume the expert triggers an event at all timesteps (the ALL query), in which case we require full demonstrations.
However, if we assume the expert is optimal under an ANY or AT query, full demonstrations are not required because the event is not assumed to be triggered at each timestep. This means our supervision can be of the form of a desired set of states 
rather than full trajectories. For example, in the vision-based robotics case, this means that we can specify goals using images of a desired goal state, which are much easier to obtain than full demonstrations. 

Formally, for each query, we assume our dataset of states and actions $(s, a) \sim p_{data}(s,a|e=1)$ when the event has happened, assuming the data-generating policy follows one of our inference queries. Our objective is imitation: we wish to train a model which produces samples that match the data. To that end, we learn the parameters of the model $p_\theta(s,a|e=1)$, trained with the maximum likelihood objective:
\[ \mathcal{L}(\theta) = -E_{p_{data}}\left[\log p_\theta(s,a|e=1)\right]\]

The gradient of this model is:
\begin{align}
\label{eqn:irl_grad}
\nabla_\theta \mathcal{L}(\theta) = -E_{p_{data}}\left[\nabla_\theta \log p_\theta(s,a|e=1) \right]
+ E_{p_{\theta}}\left[\nabla_\theta \log p_\theta(s,a|e=1)
\right]
\end{align}
Where the second term corresponds to the gradient of the partition function of $p_\theta(s, a |e=1)$. Thus, this implies an algorithm where we sample states and actions from the model $p_{\theta}$ and use them to compute the gradient update. 


\subsection{Sample-based optimization with discriminators}
\label{sec:airl}
In high-dimensional settings, a convenient method to perform the gradient update in Eqn.~\ref{eqn:irl_grad} is to embed the model $p_\theta(s,a|\textrm{evidence})$ within a discriminator between samples $p_\theta$ and data $p_{data}$ and take the gradient of the cross-entropy loss. Second, in order to draw samples from the model we instead train a "generator" policy via variational inference to draw samples from $p_{\theta}$. The variational inference procedure corresponds to those outlined in Section~\ref{sec:event_based_control}.

Specifically, we adapt the method of~\cite{Fu18}, which alternates between training a discriminator with the fixed form \[D_\theta(s,a) = p_\theta(s,a) / (p_\theta(s,a) + \pi(a|s))\] to distinguish between policy samples and success states, and a policy that minimizes the KL divergence between $D_{KL}(\pi(s,a) || p_\theta(s,a |=1))$. As shown in previous work~\citep{Finn16b,Fu18}, the gradient of the cross entropy loss of the discriminator is equivalent to the gradient of Eqn.~\ref{eqn:irl_grad}, and using the reward $\log D_\theta(s,a) - \log(1-D_\theta(s,a))$ with the appropriate inference objective is equivalent to minimizing KL between the sampler and generator. We show the latter equivalence in Appendix~\ref{app:irl}, and pseudocode for our algorithm is presented in Algorithm~\ref{alg:inverse_rl}

\begin{algorithm}[tb]
\caption{VICE: Variational Inverse Control with Events}
\label{alg:inverse_rl}
\begin{algorithmic}[1]
    \STATE Obtain examples of expert states and actions $s^E_i, a^E_i$
    \STATE Initialize policy $\pi$ and binary discriminator $D_{\theta}$.
    \FOR{step $n$ in \{1, \dots, N\}}
        \STATE Collect states and actions $s_i = (s_1, ..., s_T), a_i=(a_1, ..., a_T)$ by executing $\pi$.
        \STATE Train $D_{\theta}$ via logistic regression to classify expert data $s^E_i, a^E_i$ from samples $s_i, a_i$.
        \STATE Update $\pi$ with respect to $p_\theta$ using the appropriate inference objective.
    \ENDFOR
\end{algorithmic}
\end{algorithm}
\section{Experimental evaluation}
\label{sec:experiments}

Our experimental evaluation aims to answer the following questions: (1) How does the behavior of an agent change depending on the choice of query? We study this question in the case where the event probabilities are already specified. (2) Does our event learning framework (VICE) outperform simple alternatives, such as offline classifier training, when learning event probabilities from data? We study this question in settings where it is difficult to manually specify a reward function, such as when the agent receives raw image observations. (3) Does learning event probabilities provide better shaped rewards than the ground truth event occurrence indicators? Additional videos and supplementary material are available at \url{https://sites.google.com/view/inverse-event}.

\subsection{Inference with pre-specified event probabilities}

\begin{wrapfigure}{r}{0.4\textwidth}
\vspace{-0.7cm}
\centering 
\includegraphics[width=0.16\textwidth]{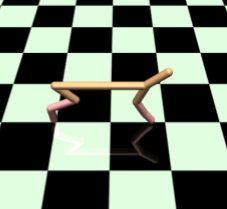}
\hspace{0.5cm}
\includegraphics[width=0.16\textwidth]{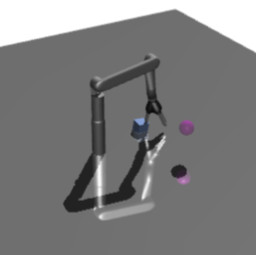}
\caption{\small \label{fig:goal_reaching_tasks}
 HalfCheetah and Lobber tasks.}
\vspace{-0.45cm}
\end{wrapfigure}

We first demonstrate how the ANY and ALL queries in our framework result in different behaviors. We adapt TRPO~\citep{Schulman15}, a natural policy gradient algorithm, to train policies using our query procedures derived in Section~\ref{sec:event_based_control}. Our examples involve two goal-reaching domains, HalfCheetah and Lobber, shown in Figure~\ref{fig:goal_reaching_tasks}. The goal of HalfCheetah is to navigate a 6-DoF agent to a goal position, and in Lobber, a robotic arm must throw an block to a goal position. To study the inference process in isolation, we manually design the event probabilities as $e^{-||x_{agent} - x_{target}||_2}$ for the HalfCheetah and $e^{- \|x_{block}-x_{goal}\|^2}$ for the Lobber.

\begin{wraptable}{r}{0.49\textwidth}
\small
\vspace{-0.6cm}
\begin{center}

\begin{tabular}{c|cc}
\hline
Query & Avg. Dist & Min. Dist \\
\hline
HalfCheetah-ANY & 1.35 (0.20) &\textbf{0.97} (0.46) \\
HalfCheetah-ALL & \textbf{1.33} (0.16) & 2.01 (0.48) \\
\hdashline
HalfCheetah-Random & 8.95 (5.37) & 5.41 (2.67) \\
\hline 
Lobber-ANY & 0.61 (0.12) &\textbf{0.25} (0.20) \\
Lobber-ALL & \textbf{0.59} (0.11) & 0.36 (0.21) \\
\hdashline
Lobber-Random & 0.93 (0.01) & 0.91 (0.01) \\
\hline 
\end{tabular}

\caption{\label{tbl:halfcheetah_results} \small 
 Results on HalfCheetah and Lobber tasks (5 trials). The ALL query generally results in superior returns, but the ANY query results in the agent reaching the target more accurately. Random refers to a random gaussian policy.}

\end{center}
\vspace{-0.6cm}
\end{wraptable}

The experimental results are shown in Table~\ref{tbl:halfcheetah_results}. While the average distance to the goal for both queries was roughly the same, the ANY query results in a much closer minimum distance. This makes sense, since in the ALL query the agent is punished for every time step it is not near the goal. The ANY query can afford to receive lower cumulative returns and instead has max-seeking behavior which more accurately reaches the target. Here, the ANY query better expresses our intention of reaching a target. 

\subsection{Learning event probabilities}

\begin{wraptable}{r}{0.6\textwidth}
\small
\vspace{-0.6cm}

\caption{\small Results on Maze, Ant and Pusher 
environments (5 trials). The metric reported is the final distance to the goal state (lower is better). VICE performs better than the classifier-based setup on all the tasks, and the performance is substantially better for the Ant and Pusher task. Detailed learning curves are provided in Appendix ~\ref{app:experiments}.}
\label{tbl:antvpm}
 \centering
\begin{tabular}{c|c|c c|c}
 \hline
 & \multicolumn{1}{c|}{Query type} & \multicolumn{1}{c}{Classifier} & \multicolumn{1}{c|}{VICE (ours)} & \multicolumn{1}{c}{True Binary} \\
 \hline
 \parbox[t]{2mm}{\multirow{2}{*}{\rotatebox[origin=c]{90}{\small{Maze}}}} 
 & ALL & 0.35 (0.29) & {\bf 0.20} (0.19) & {\multirow{2}{*}{0.11 (0.01)}}\\
 & ANY & 0.37 (0.21) & 0.23 (0.15) & \\
\hline

 \parbox[t]{2mm}{\multirow{2}{*}{\rotatebox[origin=c]{90}{Ant}}}
 & ALL & 2.71 (0.75) & 0.64 (0.32) & {\multirow{2}{*}{1.61 (1.35)}} \\
 & ANY & 3.93 (1.56) & {\bf 0.62} (0.55) &\\
\hline

 \parbox[t]{2mm}{\multirow{2}{*}{\rotatebox[origin=c]{90}{Push}}}
 & ALL & 0.25 (0.01) & {\bf 0.09 } (0.01) & {\multirow{2}{*}{0.17 (0.03)}} \\
 & ANY & 0.25 (0.01) &  0.11 (0.01) &\\
 
 \hline
 \end{tabular}
 \vspace{-0.4cm}
 
 \end{wraptable}

We now compare our event probability learning framework, which we call variational inverse control with events (VICE), against an offline classifier training baseline. We also compare our method to learning from true binary event indicators, to see if our method can provide some reward shaping benefits to speed up the learning process.
The data for learning event probabilities comes from success states.
That is, we have access to a set of states $\{s^{E}_i\}_{i=1...n}$, which may have been provided by the user, for which we know the event took place. This setting generalizes IRL, where instead of entire expert demonstrations, we simply have examples of successful states. The offline classifier baseline trains a neural network to distinguish success state ("positives") from states collected by a random policy. The number of positives and negatives in this procedure is kept balanced. This baseline is a reasonable and straightforward method to specify rewards in the standard RL framework, and provides a natural point of comparison to our approach, which can also be viewed as learning a classifier, but within the principled framework of control as inference. We evaluate these methods on the following tasks:

\textbf{Maze from pixels.} In this task, a point mass needs to navigate to a goal location through a small maze, depicted in Figure~\ref{fig:tasks}. The observations consist of 64x64 RGB images that correspond to an overhead view of the maze. The action space consists of X and Y forces on the robot. We use CNNs to represent the policy and the event distributions, training with 1000 success states as supervision.

\textbf{Ant.} In this task, a quadrupedal ``ant'' (shown in Figure~\ref{fig:tasks}) needs to crawl to a goal location, placed 3m away from its starting position. The state space contains joint angles and XYZ-coordinates of the ant. The action space corresponds to joint torques. We use 500 success states as supervision. 

\textbf{Pusher from pixels.} In this task, a 7-DoF robotic arm (shown in Figure~\ref{fig:tasks}) must push a cylinder object to a goal location. The state space contains joint angles, joint velocities and 64x64 RGB images, and the action space corresponds to joint torques. We use 10K success states as supervision.

\begin{wrapfigure}{r}{0.5\textwidth}
\vspace{-0.5cm}
\centering
\includegraphics[width=0.5\columnwidth]{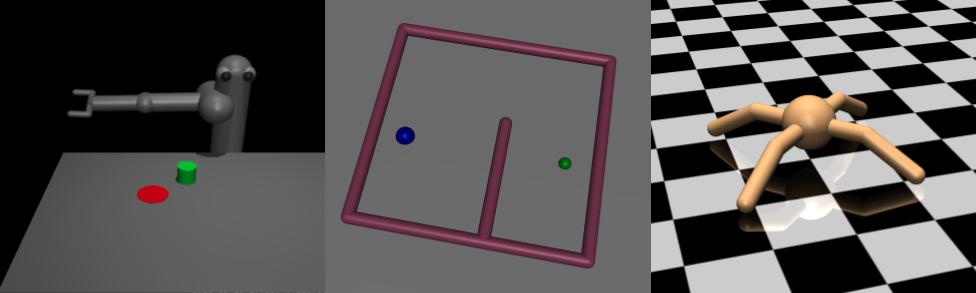}
\caption{ \small \label{fig:tasks}
Visualizations of the Pusher, Maze, and Ant tasks. In the Maze and Ant tasks, the agent seeks to reach a pre-specified goal position. In the Pusher task, the agent seeks to place a block at the goal position.}
\vspace{-0.5cm}
\end{wrapfigure}

Training details and neural net architectures can be found in Appendix ~\ref{app:experiments}. We also compare our method against a reinforcement learning baseline that has access to the true binary event indicator. For all the tasks, we define a ``goal region'', and give the agent a +1 reward when it is in the goal region, and 0 otherwise. Note that this RL baseline, which is similar to vanilla RL from sparse rewards, ``observes'' the event, providing it with additional information, while our model only uses the event probabilities learned from the success examples and receives no other supervision. It is included to provide a reference point on the difficulty of the tasks. Results are summarized in Table~\ref{tbl:antvpm}, and detailed learning curves can be seen in Figure~\ref{fig:push_curve} and Appendix~\ref{app:experiments}. We note the following salient points from these experiments. 

\begin{wrapfigure}{r}{0.5\textwidth}
\vspace{-0.5cm}
\centering
\includegraphics[width=0.4\columnwidth]{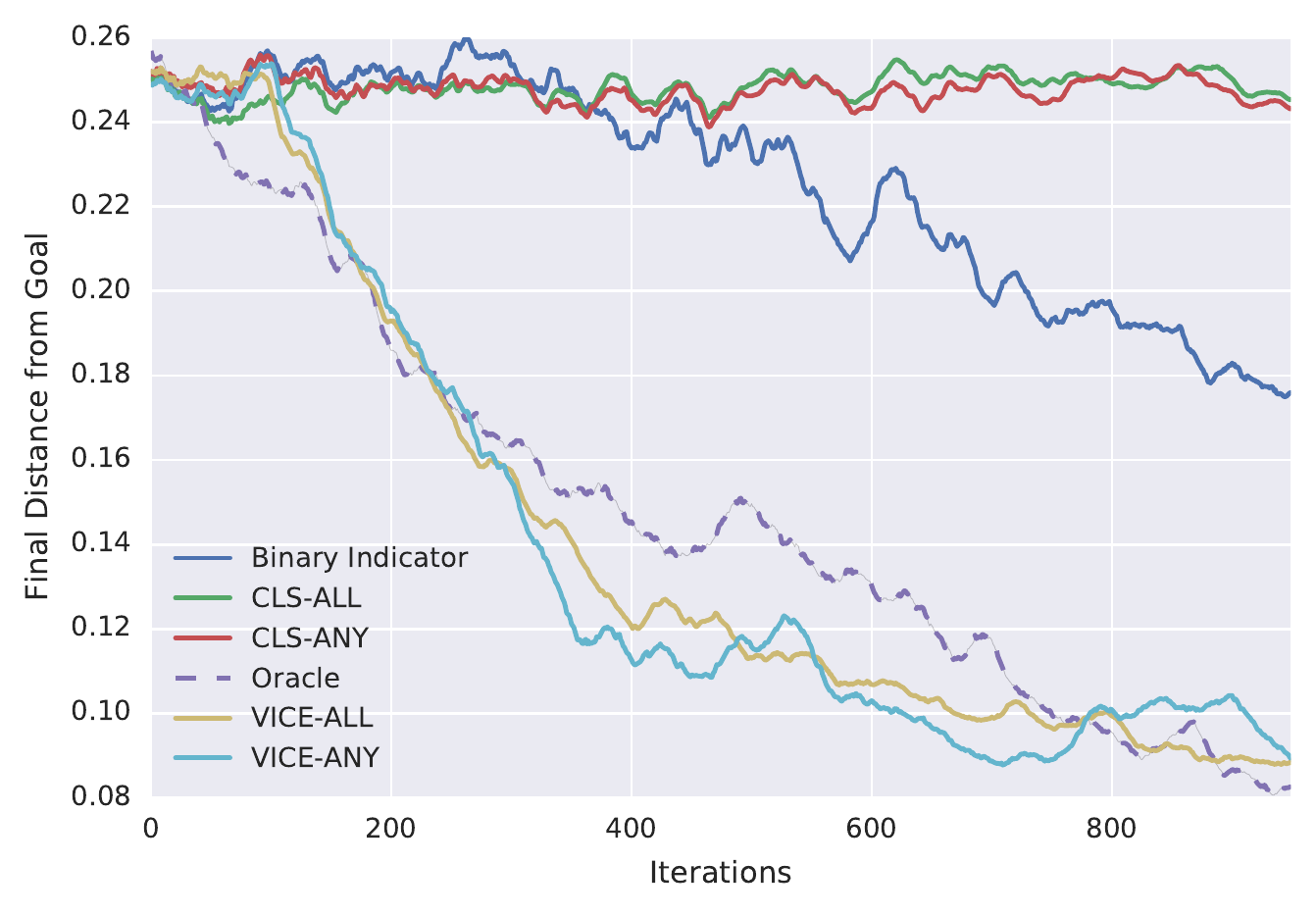}
\caption{ \label{fig:push_curve} \small  Results on the Pusher task (lower is better), averaged across five random seeds. VICE significantly outperforms the naive classifier and true binary event indicators. Further, the performance is comparable to learning from an oracle hand-engineered reward (denoted in dashed lines). Curves for the Ant and Maze tasks can be seen in Appendix~\ref{app:experiments}. }
\vspace{-0.5cm}
\end{wrapfigure}

\textbf{VICE outperforms na\"{i}ve classifier.} We observe that for \textit{Maze}, both the simple classifier and our method (VICE) perform well, though VICE achieves lower final distance. In the \textit{Ant} environment, VICE is crucial for obtaining good performance, and the simple classifier fails to solve the task. Similarly, for the \emph{Pusher} task, VICE significantly outperforms the classifier (which fails to solve the task). Unlike the na\"{i}ve classifier approach, VICE actively integrates negative examples from the current policy into the learning process, and appropriately models the event probabilities together with the dynamical properties of the task, analogously to IRL.

\textbf{Shaping effect of VICE.}
For the more difficult ant and pusher domains, VICE actually outperforms RL with the true event indicators. We analyze this shaping effect further in Figure~\ref{fig:push_curve}: our framework obtains performance that is superior to learning with true event indicators, while requiring much weaker supervision. This indicates that the event probability distribution learned by our method has a reward-shaping effect, which greatly simplifies the policy search process. We further compare our method against a hand-engineered shaped reward, depicted in dashed lines in Figure~\ref{fig:push_curve}. The engineered reward is given by $  - 0.2*\|x_{block}-x_{arm}\| - \|x_{block}-x_{goal}\|$, and is impossible to compute when we don't have access to $x_{block}$, which is usually the case when learning in the real world. We observe that our method achieves performance that is comparable to this engineered reward, indicating that our automated shaping effect is comparable to hand-engineered shaped rewards.

\section{Conclusion}

In this paper, we described how the connection between control and inference can be extended to derive a reinforcement learning framework that dispenses with the conventional notion of rewards, and replaces them with events. Events have associated probabilities. which can either be provided by the user, or learned from data. Recasting reinforcement learning into the event-based framework allows us to express various goals as different inference queries in the corresponding graphical model. The case where we learn event probabilities corresponds to a generalization of IRL where rather than assuming access to expert demonstrations, we assume access to states and actions where an event occurs. IRL corresponds to the case where we assume the event happens at every timestep, and we extend this notion to alternate graphical model queries where events may happen at a single timestep. 

\section*{Acknowledgements}
This research was supported by an ONR Young Investigator Program award, the National Science Foundation through IIS-1651843, IIS-1614653, and IIS-1700696, Berkeley DeepDrive, and donations from Google, Amazon, and NVIDIA.

\bibliography{example_paper}
\bibliographystyle{icml2018}

\newpage
\appendix
\part*{Appendices}

\section{Message Passing Updates for Reinforcement Learning}
\label{app:event_derivations_mprl}
In this section, we derive message passing updates that can be used to obtain an optimal policy in the graphical model for control (visualized below).

\begin{figure}[h]
\centering
\begin{tikzpicture}[->,auto,node distance=1.6cm]
\node[state](S1){$s_1$};
\node[state](S2)[right of=S1]{$s_2$};
\node[state](S3)[right of=S2]{$s_3$};
\node[state](A1)[below of=S1]{$a_1$};
\node[state](A2)[below of=S2]{$a_2$};
\node[state](A3)[below of=S3]{$a_3$};
\node[state](E1)[below of=A1,fill=gray!30]{$e_1$};
\node[state](E2)[below of=A2,fill=gray!30]{$e_2$};
\node[state](E3)[below of=A3,fill=gray!30]{$e_3$};
\node(DOTS)[right of=S3]{\ldots};
\node(DOTA)[right of=A3]{\ldots};
\node(DOTE)[right of=E3]{\ldots};
\node[state](ST)[right of=DOTS]{$s_T$};
\node[state](AT)[below of=ST]{$a_T$};
\node[state](ET)[below of=AT,fill=gray!30]{$e_T$};

\path (S1) edge node {} (S2)
           edge node {} (A1)
           edge [bend right] node {} (E1)
      (A1) edge node {} (S2)
           edge node {} (E1)
      (S2) edge node {} (S3)
           edge node {} (A2)
           edge [bend right] node {} (E2)
      (A2) edge node {} (S3)
           edge node {} (E2)
      (S3) edge node {} (A3)
           edge node {} (DOTS)
           edge [bend right] node {} (E3)
      (A3) edge node {} (E3)
           edge node {} (DOTS)
      (DOTS) edge node {} (ST)
      (DOTA) edge node {} (ST)
      (ST) edge node {} (AT)
           edge [bend right] node {} (ET)
      (AT) edge node {} (ET)
          ;
\end{tikzpicture}
\end{figure}

We define two backward messages, a state-action message $\beta(s_t, a_t) = p(e_{t:T}=1|s_t,a_t)$ and a state message $\beta(s_t) = p(e_{t:T}=1|s_t)$. The state message can be expanded in terms of the state-action message as:
\[
\beta(s_t) = p(e_{t:T}=1|s_t) = \int_{\mathcal{A}} \beta(s_t,a_t) p(a_t|s_t) da_t
\]

We can then write a recursive form for the state-action message in terms of the state message:
\begin{align*}
\beta(s_t, a_t) &= p(e_{t:T}=1|s_t,a_t)
= p(e_t=1|s_t,a_t)p(e_{t+1:T}=1|s_t,a_t)\\
&= p(e_t=1|s_t,a_t)\int_{\mathcal{S},\mathcal{A}} \beta(s_{t+1},a_{t+1})p(a_{t+1}|s_{t+1})p(s_{t+1}|s_t,a_t)
ds_{t+1}da_{t+1}\\
&= p(e_t=1|s_t,a_t)E_{s_{t+1}}[\beta(s_{t+1})]
\end{align*}

Next, we define $\log p(e_t=1|s_t,a_t)=r(s_t,a_t)$ as the reward factor and set the reference policy $p(a_t|s_t)=C$ to the uniform distribution as before. Non-uniform reference policies correspond to policy optimization with a modified reward function $r^{new}(s,a)= r^{old}(s,a) + \log C - \log p(a_t|s_t)$ and with a uniform reference policy. We can now assign familiar names to these messages, by defining $Q(s,a)=\log \beta(s,a)$ and $V(s)=\log \beta(s) - \log C$. Our message passing updates now resemble soft variants of Bellman backup equations:
\[
V(s_t) = \log \int_{\mathcal{A}} \exp\{Q(s_t,a_t)\} da_t
\]
\[
Q(s_t,a_t) = [r(s_t, a_t) +\log C] + \log E_{s_{t+1}}[\exp\{V(s_{t+1})\}]
\]
The constant $\log C$ term can be absorbed into the reward function to exactly match the equations we presented in Section~\ref{sec:graphical_model_control}, but we leave the term explicit for clarity of explanation. For the fixed horizon task we presented, adding a constant offset to the reward cannot change the optimal policy. As previously mentioned in Section~\ref{sec:variational_inf}, under deterministic dynamics, $Q(s_t,a_t) = r(s_t,a_t) + V(s_{t+1})$, which aligns with MaxCausalEnt~\citep{Ziebart10} and soft Q-learning~\citep{Haarnoja2017,Nachum17}.

From these value functions, we can easily obtain the optimal policy $p(a_t|s_t, e_{1:T}=1)$. First note that due to conditional independence, $p(a_t|s_t, e_{1:T}=1)=p(a_t|s_t, e_{t:T}=1)$. Applying Bayes' rule, we now have:
\[
p(a_t|s_t, e_{t:T}=1) = \frac{p(e_{t:T}=1|s_t,a_t)p(a_t|s_t)}{p(e_{t:T}=1|s_t)}
= \frac{\beta(s_t,a_t)C}{\beta(s_t)}
= \exp\{Q(s_t,a_t) - V(s_t)\}
\]

\section{Control as Variational Inference}
\label{sec:variational_inf}
Performing inference directly in the graphical model for control produces solutions that are optimistic with respect to stochastic dynamics, and produces risk-seeking behavior. This is because posterior inference is not constrained to force $p(s_{t+1}|s_t,a_t,e_{1:T}) = p(s_{t+1}|s_t,a_t)$: that is, it assumes that, like the action distribution, the next state distribution will ``conspire'' to make positive outcomes more likely. Prior work has sought to address this issue via the framework of causal entropy~\cite{Ziebart10}. To provide a more unified treatment of control as inference, we instead present a variational inference derivation that also addresses this problem. %
When conditioning the graphical model in Figure~\ref{fig:controlgraph} on $e_{1:T}=1$ as before, the optimal trajectory distribution is
\[
p(\tau|e_{1:T}) \propto p(s_1)\prod_{t=1}^{T-1} p(s_{t+1}|s_t,a_t)p(a_t|s_t)e^{r(s,a)}.
\]
We will assume that the action prior $p(a_t|s_t)$ is uniform without loss of generality, since non-uniform distributions can be absorbed into the reward term $e^{r(s,a)}$, as discussed in Appendix~\ref{app:event_derivations_mprl}.

The correct maximum entropy reinforcement learning objective emerges when performing variational inference in this model, with a variational distribution of the form \mbox{$q_\theta(\tau) = p(s_1)\prod_{t=1}^{T-1} p(s_{t+1}|s_t,a_t)q_\theta(a_t|s_t)$}. In this distribution, the initial state distribution and dynamics are forced to be equal to the true dynamics, and only the action conditional $q_\theta(a_t|s_t)$, which corresponds to the policy, is allowed to vary. Writing out the variational objective and simplifying, we get
\begin{align*}
-D_{KL}(q_\theta(\tau) || &p(\tau|e_{1:T}))
=
E_{\tau \sim q(\tau)}\left[\sum_{t=1}^T r(s_t,a_t) - \log q_\theta(a|s) \right].
\end{align*}
We see that we obtain the same problem as (undiscounted) entropy-regularized reinforcement learning, where $q_\theta(a|s)$ serves as the policy. For more in-depth discussion, see Appendix~\ref{app:event_derivations_causal_all}. We can recover the discounted objective by modifying the dynamics such that the agent has a $(1-\gamma)$ probability of transitioning into an absorbing state with 0 reward. 

We have thus derived how maximum entropy reinforcement learning can be recovered by applying variational inference with a specific choice of variational distribution to the graphical model for control. %

\section{Derivations for Event-based Message Passing Updates}
\label{app:event_derivations_mp}

\subsection{ALL query}
\label{app:event_derivations_mp_all}
The goal of the ALL query is to trigger an event at every timestep. Mathematically, we want trajectories such that $e_{1:T}=1$. As the ALL query is mathematically identical to MaxEnt RL, we redirect the reader to Appendix~\ref{app:event_derivations_mprl} for the derivation.

\subsection{ANY query}
\label{app:event_derivations_mp_any}

The goal of the ANY query is to trigger an event at least once. Mathematically, we want trajectories such that $e_1=1 \Or\ e_2=1\ ...\ e_T=1$.

First, we introduce a more concise notation by introducing a stopping time $t^* = \textrm{argmin}_{t\ge 0}\{e_t=1\}$ which denotes the first time that an event happens. Asking for the stopping time to be within a certain interval is the same as asking the event to happen at least once within that interval:
\[p(t^* \in [t,T]) = p(e_t=1 \Or\ e_{t+1}=1\ ...\ e_T=1)\]

We can now derive the message passing updates. We derive the state messages as:
\[\beta(s_t) = p(t^* \in [t,T]|s_t) =  \int_{\mathcal{A}} p(t^* \in [t,T]|s_t,a_t)p(a_t|s_t)da_t \]

The state-action message can be derived as:
\begin{align*}
\beta(s_t, a_t) &= p(t^* \in [t,T]|s_t, a_t)\\
&= p(e_t=1|s_t, a_t) + p(t^* \in [t+1,T]|s_t, a_t)
- p(e_t=1|s_t,a_t) p(t^* \in [t+1,T]|s_t, a_t) \\ 
&= p(e_t=1|s_t, a_t) + p(e_t=0|s_t, a_t) p(t^* \in [t+1,T]|s_t, a_t)  \\
&= p(e_t=1|s_t, a_t) + p(e_t=0|s_t, a_t) \int_{\mathcal{S},\mathcal{A}} p(t^* \in [t+1,T],s_{t+1},a_{t+1}|s_t,a_t)ds_{t+1}da_{t+1}  \\
&= p(e_t=1|s_t, a_t) + p(e_t=0|s_t, a_t) E_{s_{t+1}}\left[ \int_{\mathcal{A}} p(t^* \in [t+1,T]|s_{t+1},a_{t+1})p(a_{t+1}|s_{t+1})da_{t+1}\right]  \\
&= p(e_t=1|s_t, a_t) + p(e_t=0|s_t, a_t) E_{s_{t+1}}[\beta(s_{t+1})]  \\
\end{align*}

We can now define our Q and value functions as log-messages as done in Appendix~\ref{app:event_derivations_mprl} to obtain the following backup rules:

\[V(s_t) = \log \int_{\mathcal{A}} \exp\{Q(s_t,a_t)\} da_t\]
\[Q(s_t, a_t) = \log \left(p(e_t=1|s_t,a_t) + p(e_t=0|s_t,a_t)E_{s_{t+1}}[\exp\{V(s_t)\}]\right)\]

One caveat here is that the policy, $p(a_t|s_t, t^* \in [t,T])$, always seeks to make the event happening in the future, which we refer to as the \textit{seeking} policy. The correct non-seeking policy would be indifferent to actions after the event has happened. However, in terms of achieving the objective, both policies will behave exactly the same until the event is triggered, after which the behavior of the policy will no longer matter. For example, if we operate in the first exit scenario, and consider the episode terminated after the goal event is achieved, then we never encounter the scenario when the event occurs in the past. 

If we would like to compute the non-seeking policy, we can compute a forward pass which keeps track of the probability that the event has happened:
\[
p(t^* \in [1, t]|s_{1:t},a_{1:t}) = p(e_t=1|s_t,a_t) + p(e_t=0|s_t,a_t)p(t^* \in [1, t-1]|s_{1:t-1},a_{1:t-1})
\]
We can then use this forward message in conjunction with our backward messages to obtain a non-seeking policy as:
\[
p(a_t|s_{1:t}, a_{1:t-1}, t^* \in [1,T]) 
= \frac{p(a_t|s_{1:t}, a_{1:t}, t^* \in [1,T])p(a_t|s_t)}{p(a_t|s_{1:t}, a_{1:t-1}, t^* \in [1,T])}
\]

Where
\begin{align*}
p(t^* \in [1,T]|s_{1:t}, a_{1:t}) 
&= p(t^* \in [1,t-1]|s_{1:t-1}, a_{1:t-1}) + p(t^* \notin [1,t-1]|s_{1:t-1}, a_{1:t-1})p(t \in [t,T]|s_t, a_t)
\end{align*}
\begin{align*}
p(t^* \in [1,T]|s_{1:t}, a_{1:t-1}) 
&= \int_{\mathcal{A}} p(t^* \in [1,T]|s_{1:t}, a_{1:t})p(a_t|s_t) da_t
\end{align*}
Note that while the policy is conditioned on all past states and actions, it only depends on them through the forward message, or the cumulative probability that the event has happened.

\section{Derivations for Variational Objectives}
\label{app:event_derivations_causal}

\subsection{ALL query}
\label{app:event_derivations_causal_all}
We briefly reviewed the variational derivation for standard RL in Section~\ref{sec:variational_inf}. In this section, we present a more thorough derivation under the events framework and additionally discuss extensions to discounted formulations.

First, we write down the joint trajectory-event distribution, which is simply the product of all factors in the graphical model:
\[
p(\tau ,e_{1:T}=1) = p(s_1)\prod_{t=1}^{T-1} p(s_{t+1}|s_t, a_t) p(a_t|s_t) p(e_t=1|s_t,a_t)
\]

We can obtain the optimal trajectory distribution by conditioning and setting the reference policy $p(a_t|s_t)$ as the uniform distribution:
\[
p(\tau|e_{1:T}=1) \propto p(s_1)\prod_{t=1}^{T-1} p(s_{t+1}|s_t, a_t)p(e_t=1|s_t,a_t)
\]

We now perform variational inference with a distribution of the following form, where the dynamics have been forced to equal the true dynamics of the MDP: 
\[q_\theta(\tau) = p(s_1)\prod_{t=1}^{T-1} p(s_{t+1}|s_t,a_t)q_\theta(a_t|s_t)\]
Here, $q_\theta(a_t|s_t)$ is the only term that is allowed to vary, and represents the learned policy. When we minimize the KL divergence between $q$ and $p$, the dynamics terms cancel and we recover the following entropy-regularized policy objective:
\begin{align*}
-D_{KL}(q_\theta(\tau) || p(\tau|e_{1:T}=1))
&= -E_{q_\theta}[\sum_{t=0}^T \log q_\theta(a_t|s_t) - \sum_{t=0}^T \log p(e_t=1|s_t, a_t)] + C\\
&= E_{q_\theta}[\sum_{t=0}^T \log p(e_t=1|s_t, a_t) + H(\pi(\cdot|s_t))] + C 
\end{align*}
The constant C is due to proportionality in the optimal trajectory distribution, and can be ignored in the optimization process.

If we define the empirical returns $\hat{Q}$ as $\hat{Q}(s_t, a_t) = \sum_{t'=t}^T \log p(e_{t'}=1|s_{t'},a_{t'})$, we can write the returns recursively as:
\[\hat{Q}(s_t, a_t) = \log p(e_t=1|s_t, a_t) + \hat{Q}(s_{t+1}, a_{t+1})\]

In this discounted case, we consider the case when the dynamics has a $(1-\gamma)$ chance of transitioning into an absorbing state with reward or $\log p(e_t=1|s_t, a_t)=0$. This means we now adjust the recursion as:
\[\hat{Q}(s_t, a_t) = \log p(e_t=1|s_t, a_t) + \gamma \hat{Q}(s_{t+1}, a_{t+1})\]

\subsection{ANY query}
\label{app:event_derivations_causal_any}

As with our derivation in the RL case, we begin by writing down our trajectory distribution. Our target trajectory distribution is $p(\tau | t^* \in [1,T])$, or trajectories where the event happens at least once.

First, we can use Bayes' rule to obtain:
\[
\log p(\tau |  t^* \in [1,T])
= \log p( t^* \in [1,T] | \tau) + \log p(\tau) - \log p(t^* \in [1,T])
\]

The last term is a proportionality constant with respect to the trajectories. The second term is the trajectory distribution induced by the reference policy. The first term can be simplified further.

Note that the probability that the event first happens at $t^*$ is $p(t^*=t|\tau) = p(e_t=1|s_t, a_t)\prod_{t'=1}^{t-1}p(e_{t'}=0|s_{t'}, a_{t'})$ (i.e. the event happens at $t^*$ but not before). Now we can write:

\[
p( t^* \in[1,T]|\tau) 
= \sum_{t=1}^T p(t^*=t|\tau)
= \sum_{t=1}^T p(e_t=1|s_t, a_t)\prod_{t'=1}^{t-1}p(e_{t'}=0|s_{t'}, a_{t'})
\]

To write down a recursion, we now define the quantity $\hat{\beta}(s_{t:T}, a_{t:T}) = p(t^*\in[t,T]| s_{t:T}, a_{t:T})$. We can now express the above term recursively as:
\begin{align*}
\sum_{t=1}^T p(e_t=1|s_t, a_t)\prod_{t'=1}^{t-1}p(e_{t'}=0|s_{t'}, a_{t'})
&= p(e_1=1|s_1, a_1) + p(e_1=0|s_1,a_1)\hat{\beta}(s_{2:T}, a_{2:T}) \\
&= \hat{\beta}(s_{1:T}, a_{1:T})
\end{align*}
Thus, if we define our empirical Q-function $\hat{Q}(s_t, a_t) = \log \hat{\beta}(s_{1:T}, a_{1:T})$, our recursion now becomes:
\[
\hat{Q}(s_t, a_t) 
= \log( p(e_t=1|s_t, a_t) + p(e_t=0|s_t, a_t) e^{\hat{Q}(s_{t+1}, a_{t+1})})
\]

Using the same variational distribution $q_\theta(\tau) = p(s_1)\prod_{t=1}^T p(s_{t+1}|s_t, a_t) q_\theta(a_t|s_t)$ as before, we can write our optimization objective as:
\begin{align*}
-D_{KL}(q_\theta(\tau) || p(\tau|t^* \in [1,T])) = E_q[ \hat{Q}(s_{1:T}, a_{1:T}) - \sum_{t=1}^T \log q_\theta(a_t|s_t)] + C
\end{align*}
Where the constant $C$ absorbs terms from the reference policy $p(a_t|s_t)$ which we set to uniform, and the proportionality constant $\log p(t^* \in [1,T])$.

To achieve a discounted objective case, we consider the case when the dynamics has a $(1-\gamma)$ chance of transitioning into an absorbing state where the event can never happen $p(e_t=1|s_t,a_t)=0$. Note that this is different from the all query. This means we now adjust the recursion as:
\[
\hat{Q}(s_t, a_t) 
= \gamma \log\left( p(e_t=1|s_t, a_t) + p(e_t=0|s_t, a_t) e^{\hat{Q}(s_{t+1}, a_{t+1})}\right) + (1-\gamma) \log p(e_t=1|s_t, a_t)
\]

\section{Policy Gradients for Events}
\label{app:policy_gradient}

Because the ALL query is mathematically identical to standard RL, we do not derive the policy gradient estimator here.

For the ANY query, we consider the objective \[J(\pi) = E_\pi[ \hat{Q}(s_{1:T}, a_{1:T}) - \sum_{t=1}^T \log \pi(a_t|s_t)]\]. For simplicity we disregard the entropy term as that portion remains unchanged from standard RL.

Applying logarithmic differentiation, and simplifying, we can obtain the gradient estimator.
\[
E_\pi[\sum_{t=1}^T \nabla \log \pi(a_t|s_t) (\hat{Q}(s_{1:T}, a_{1:T}) - \log \pi(a_t|s_t))]
\]

The next step is that we wish to only consider future returns, i.e. we wish to replace $\hat{Q}(s_{1:T}, a_{1:T})$ with $\hat{Q}(s_{t:T}, a_{t:T})$. First, note that before the event happens before $t$, then $\hat{Q}(s_{1:T}, a_{1:T})$ and $\hat{Q}(s_{t:T}, a_{t:T})$ are identical, but if $t$ is after then event then the returns estimator should be 0. Thus, we need to keep track of the cumulative probability that an event occurs and rewrite the estimator as:
\[
E_\pi[\sum_{t=1}^T \nabla \log \pi(a_t|s_t) p(t \le t^*|s_{1:t}, a_{1:t}) (\hat{Q}(s_{t:T}, a_{t:T}) - \log \pi(a_t|s_t))]
\]


\section{Variational Inverse Control with Events (VICE)}
\label{app:irl}
In this section, we explicitly write down the MLE objective for the inverse formulation of each query type (AT, ALL, ANY).

We then show that we can train a sampler for the model by optimizing a trajectory-based objective corresponding to the inference procedures outlined in Appendix~\ref{app:event_derivations_causal}. The statement we show for each query type is that the KL between trajectory distributions upper bounds the KL between our sampler and the model we wish to draw samples from.

\subsection{AT query VICE}

In the AT query, we assume we observe states and actions where the event occurred at a specific timestep, denoted as $t$. We assume our data comes from the distribution $p_{data}(s_t,a_t|e_t=1)$ 

The maximum likelihood objective is:
\[ \mathcal{L}_{AT}(\theta) = -E_{p_{data}}\left[\log p_\theta(s_t, a_t|e_t=1)\right]\]

We now derive the objective for training our sampler $q(s_t, a_t)$ so that it matches $p_\theta(s_t, a_t)$. By the chain rule for KL divergence, we have the upper-bound $D_{KL}(q(s_t,a_t) || p_\theta(s_t, a_t|e_t=1)) \le D_{KL}(q(\tau)||p_\theta(\tau|e_t=1))$. After obtaining $q(\tau)$, we can sample states and actions by executing full trajectories and picking the states and actions that correspond to timestep $t$.

\subsection{ALL query VICE}

In the ALL query, we assume our data comes from the average distribution of states and actions along trajectories where the event happens at all timesteps (averaged over timesteps) $p_{data}(s,a|e_{1:T}=1) = \frac{1}{T}\sum_{t=1}^T p_{data}(s_t, a_t | e_{1:T}=1)$.
This is similar to matching the occupancy measure of a policy, which is equivalent to inverse reinforcement learning as shown by~\cite{Ho16b}. 

The maximum likelihood objective is:
\[ \mathcal{L}_{ALL}(\theta) = -E_{p_{data}}\left[\log p_\theta(s_t, a_t|e_{1:T}=1)\right]\]

We can upper-bound the KL-divergence of interest between the sampler and the model with a KL-divergence on trajectories as:
\begin{align*}
 &D( \frac{1}{T}\sum_t q(s_t, a_t)||\frac{1}{T}\sum_t p_\theta(s_t, a_t|e_{1:T}=1))\\
&\le \frac{1}{T}\sum_t  D( q(s_t, a_t)||p_\theta(s_t, a_t|e_{1:T}=1))\\
&\le \frac{1}{T}\sum_t  D( q(\tau)||p_\theta(\tau|e_{1:T}=1))\\
 &= D(q(\tau)||p_\theta(\tau|e_{1:T}=1))
\end{align*}
The first inequality comes from the log-sum inequality, and the second inequality comes from the chain rule for KL divergence.

\subsection{ANY query VICE}
In the ANY query formulation, we assume our data comes from the distribution of states and actions at the first timestep an event happens, $p_{data}(s_{t^*},a_{t^*}|t^* \in [1,T])$.


\[ \mathcal{L}_{ANY}(\theta) = -E_{p_{data}}\left[\log p_\theta(p_{data}(s_t,a_t|t^*=t))\right]\]

To show that optimizing the trajectory distribution bounds, we first rewrite $p(s_{t^*},a_{t^*}|t^* \in [1,T])$ over timesteps as $p(s_{t^*},a_{t^*}|t^* \in [1,T]) = \sum_{t=1}^T p(s_t, a_t | t^*=t) p(t^*=t)$. 
\begin{lemma}
\label{joint_kl}

Let $\bX = (x_1, x_2, ...)$, $\bY= (y_1, y_2, ...)$. 
Let $\bar{\mu}$ denote a set of weights which sum to one, and denote $\bar{p}(\bX) =E_{\bar{\mu}}[p_i(x_i)]$,
and $\bar{p}(\bX, \bY) = E_{\bar{\mu}}[p_i(x_i, y_i)]$ denote convex combinations of the individual distributions $p_i$. Then,

\[
D(\bar{p}(\bX, \bY) ||\bar{q}(\bX, \bY)) \ge D(\bar{p}(\bX) ||\bar{q}(\bX))
\]

\end{lemma}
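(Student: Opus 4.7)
The plan is to recognize this inequality as the standard monotonicity of KL divergence under marginalization. First I would observe that, by Fubini and the definition of the mixtures, $\bar{p}(\mathbf{X})$ is literally the $\mathbf{Y}$-marginal of $\bar{p}(\mathbf{X},\mathbf{Y})$:
\[
\int \bar{p}(\mathbf{X},\mathbf{Y})\, d\mathbf{Y} \;=\; \sum_i \mu_i \int p_i(x_i, y_i)\, dy_i \;=\; \sum_i \mu_i p_i(x_i) \;=\; \bar{p}(\mathbf{X}),
\]
and the same identity holds for $\bar{q}$. Once this identification is made, the claim reduces to showing that marginalizing two joint distributions can only decrease their KL divergence, and the mixture structure plays no further role in the argument.

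The cleanest route from there is a pointwise application of the log-sum inequality. For each fixed $\mathbf{X}$, I would write
\[
\bar{p}(\mathbf{X})\log\frac{\bar{p}(\mathbf{X})}{\bar{q}(\mathbf{X})} \;=\; \left(\int \bar{p}(\mathbf{X},\mathbf{Y})\,d\mathbf{Y}\right)\log\frac{\int \bar{p}(\mathbf{X},\mathbf{Y})\,d\mathbf{Y}}{\int \bar{q}(\mathbf{X},\mathbf{Y})\,d\mathbf{Y}},
\]
and upper bound this by $\int \bar{p}(\mathbf{X},\mathbf{Y}) \log \frac{\bar{p}(\mathbf{X},\mathbf{Y})}{\bar{q}(\mathbf{X},\mathbf{Y})}\, d\mathbf{Y}$ via the log-sum inequality. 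Integrating both sides over $\mathbf{X}$ and again invoking Fubini yields exactly the desired bound \mbox{$D(\bar{p}(\mathbf{X})\|\bar{q}(\mathbf{X})) \le D(\bar{p}(\mathbf{X},\mathbf{Y})\|\bar{q}(\mathbf{X},\mathbf{Y}))$}.

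There is essentially no obstacle in this argument once the log-sum inequality is available; the main thing to watch is bookkeeping the mixture notation so that the marginalization identity is applied to the right quantity. An entirely equivalent route would be to appeal to the data processing inequality for the deterministic channel $(\mathbf{X},\mathbf{Y}) \mapsto \mathbf{X}$, which cannot increase KL. Either phrasing produces the required inequality in a line or two after the marginalization identity is in hand.
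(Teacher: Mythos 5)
Your proof is correct. It rests on the same underlying fact as the paper's proof---that KL divergence can only decrease under marginalization of the joint mixtures---but you reach it by a slightly different route: you first make explicit the observation that $\bar{p}(\bX)$ is literally the $\bY$-marginal of $\bar{p}(\bX,\bY)$ (a step the paper leaves implicit), and then prove monotonicity directly via a pointwise log-sum inequality (equivalently, the data processing inequality for the deterministic map $(\bX,\bY)\mapsto\bX$). The paper instead invokes the chain rule for KL divergence, $D(\bar{p}(\bX,\bY)\,\|\,\bar{q}(\bX,\bY)) = D(\bar{p}(\bX)\,\|\,\bar{q}(\bX)) + D(\bar{p}(\bY|\bX)\,\|\,\bar{q}(\bY|\bX))$, and drops the nonnegative conditional term. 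The two arguments are essentially interchangeable---nonnegativity of the conditional KL term in the chain rule is itself usually proved by the log-sum inequality---but your version is more self-contained and makes the marginalization bookkeeping explicit, which is the one place where the mixture notation could otherwise cause confusion.
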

\begin{proof}
This statement directly follows from the chain rule for KL divergences, which implies:
\[
D(\bar{p}(\bX, \bY) ||\bar{q}(\bX, \bY))
= D(\bar{p}(\bX) ||\bar{q}(\bX)) + D(\bar{p}(\bX|\bY) ||\bar{q}(\bX|\bY))
\ge D(\bar{p}(\bX) ||\bar{q}(\bX))
\]

\end{proof}

Now, we can apply Lemma~\ref{joint_kl} to derive the upper-bound:
\begin{align*}
&D( \sum_t q(s_t, a_t)p(t^*=t)||\sum_t p_\theta(s_t, a_t|t^*=t)p(t^*=t))\\
&\le D( \sum_t q(\tau)p(t^*=t)||\sum_t p_\theta(\tau|t^*=t)p(t^*=t))\\
&= D( q(\tau)||p_\theta(\tau|t^* \in [1,T]))
\end{align*}
We can obtain samples from $q(s_{t^*}, a_{t^*})$ by executing full trajectories and using the first state when an event is triggered.

\subsection{Justification for using the discriminator}
In the previous section, we have justified the algorithm which updates the model via the gradient Eqn.~\ref{eqn:irl_grad}, by training a sampling policy that minimizes KL to the model distribution.

In Section~\ref{sec:airl}, we propose to implement the update via training a discriminator instead of an energy-based model $p_\theta(s,a|e=1)$. An approximate connection can be made in this case, which ignores changes in the state-distribution of the sampling policy. To see this, we represent the state-action marginal of the policy as $q(s,a) = q(a|s)\bar{p}(s)$, where $\bar{p(s)}$ is the state-marginal of the reference policy (set to uniform, see Appendix~\ref{app:event_derivations_mprl}). Note that this is not the state distribution induced by the policy, $q(s)$.

We can use Bayes rule to write our model as $p_\theta(s,a|e=1) \propto p_\theta(e=1|s,a)\bar{p}(a|s)\bar{p}(s)$, meaning our model is only parameterized by the event probability.

Following previous work~\cite{Finn16b}, we model the discriminator as $D_\theta(s,a) = \frac{p_\theta(s,a|e=1)}{p_\theta(s,a|e=1) + q(s,a)} = \frac{p_\theta(e=1|s,a) + C_\theta}{p_\theta(e=1|s,a) + C_\theta + q(a|s)}$, where $C_\theta$ is a learnable constant that corresponds to proportionality factors.

The inconsistency with using $q(s,a) = q(a|s)\bar{p}(s)$ instead of $q(s,a) = q(a|s)q(s)$ arises in the policy optimization objective, which is minimizing the KL between the latter quantity an the model. This means that we do not draw unbiased negative examples for training the discriminator, which is also noted in~\citep{Fu18}.

\section{Experiments}
\label{app:experiments}

\subsection{Experimental details for prespecified events}

On the \textit{Lobber} task, we use a diagonal gaussian policy where the mean is parametrized by a 32x32 neural network. We use a TRPO batch size of 40000 and train for 1000 iterations.

On the \textit{HalfCheetah} task, we use a diagonal gaussian policy where the mean is parametrized by a 32x32 neural network. We use a TRPO batch size of 10000 and train for 1000 iterations.

\subsection{Experimental details for learning event probabilities}
We evaluate the performance of VICE in learning event probabilities on the \textit{Ant},\textit{Maze}, and \textit{Pusher} tasks, providing comparisons to classifier-based methods. Although the binary indicator baseline is not comparable to VICE (since it observes the event while the other methods do not), we present comparisons to provide a general idea of the difficulty of the task. All experiments are run with five random seeds, and mean results are presented.

We use Gaussian policies, where the mean is parametrized by a neural network, and the covariance a learned diagonal matrix. The event distribution is represented by a neural network as well. Further hyperparameters are presented in Table \ref{tbl:hyperparameters}. 

On the \textit{Ant} task, both the policy mean network and event distribution network have two hidden layers with 200 units and ReLu activations. 

On the \textit{Maze} task, the mean network has two convolutional layers, with filter size $5 \times 5$, followed by two fully connected layers with $32$ units each with ReLu activations. The event distribution is represented using a convolutional neural network with two convolutional layers with $5 \times 5$ filters, and a final fully-connected layer with 16 units. 

On the \textit{Pusher} task, the policy is represented by a convolutional neural network with three convolutional layers, with a stride of 2 in the first layer, and a stride of 1 in the subsequent layers. We use a filter size of 3x3 in all the layers, and the number of filters are 64, 32 and 16. In line with prior work~\citep{finn16dsae}, we pre-train the convolutional layers using an auto-encoder loss on data collected from random policies. The fully connected part of the neural network consists of two layers, each having 200 units and ReLu activations to represent the policy. The event distribution is also represented by the same architecture.

\begin{table}[h]
\small
\begin{center}
\begin{tabular}{c|ccc}
\hline
 & Ant & Maze & Pusher \\
\hline
Batch Size & 10000 & 5000 & 10000\\
Iterations & 1000  & 150  & 1000 \\
Discount   & 0.99  & 0.99 & 0.99 \\
Entropy    & 0.1   & 0.1  & 0.01  \\
\# Demonstrations & 500 & 1000 & 10000\\
\hline 
\end{tabular}

\caption{\label{tbl:hyperparameters} \small 
Hyperparameters used for VICE on the Ant,Maze, and Pusher tasks}
\end{center}
\end{table}

\begin{figure}[h]
\centering
\begin{subfigure}[b]{0.66\textwidth}
  \includegraphics[width=\linewidth]{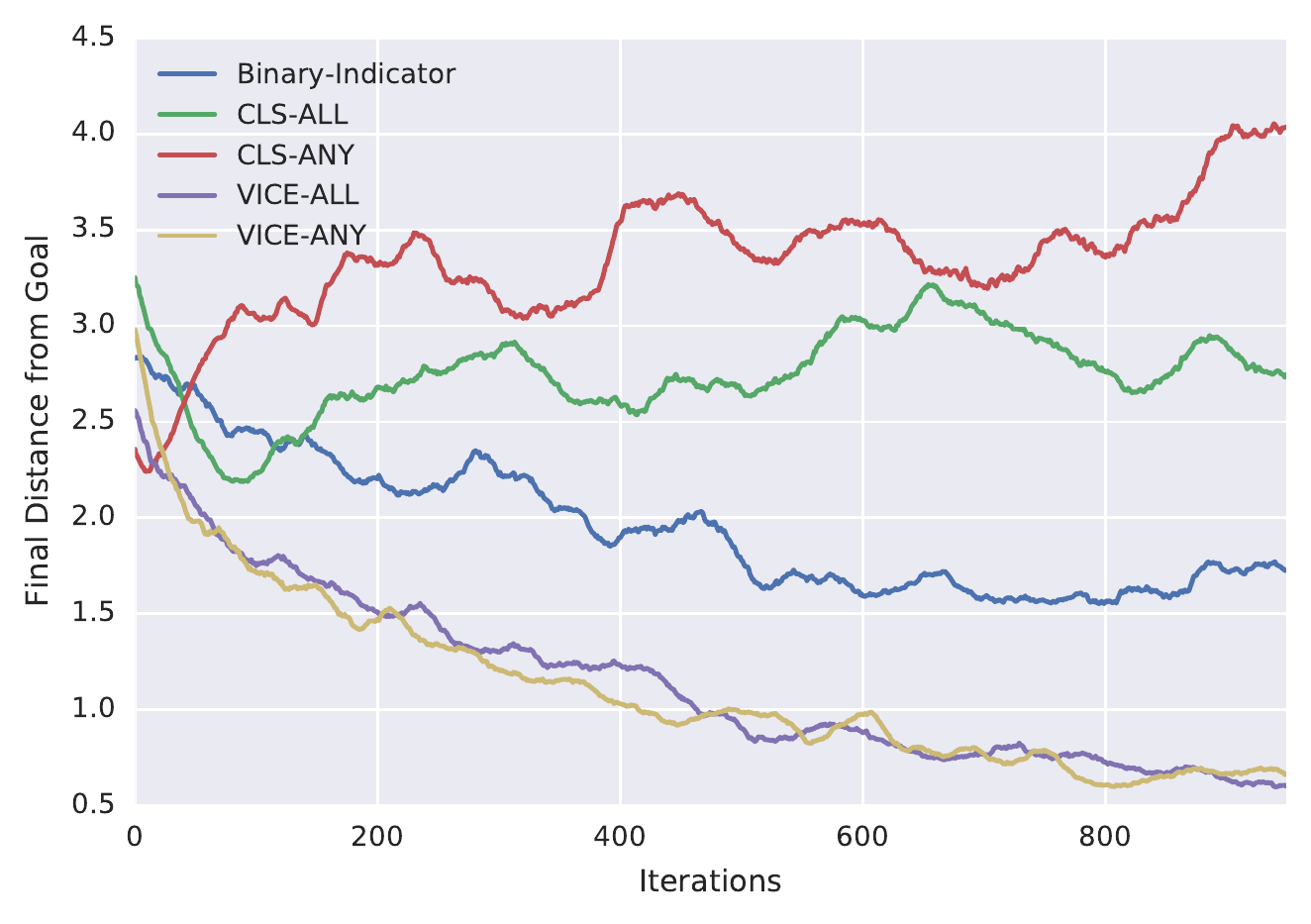}
   \caption{Ant}
   \label{fig:Ng1} 
\end{subfigure}
\begin{subfigure}[b]{0.66\textwidth}
  \includegraphics[width=\linewidth]{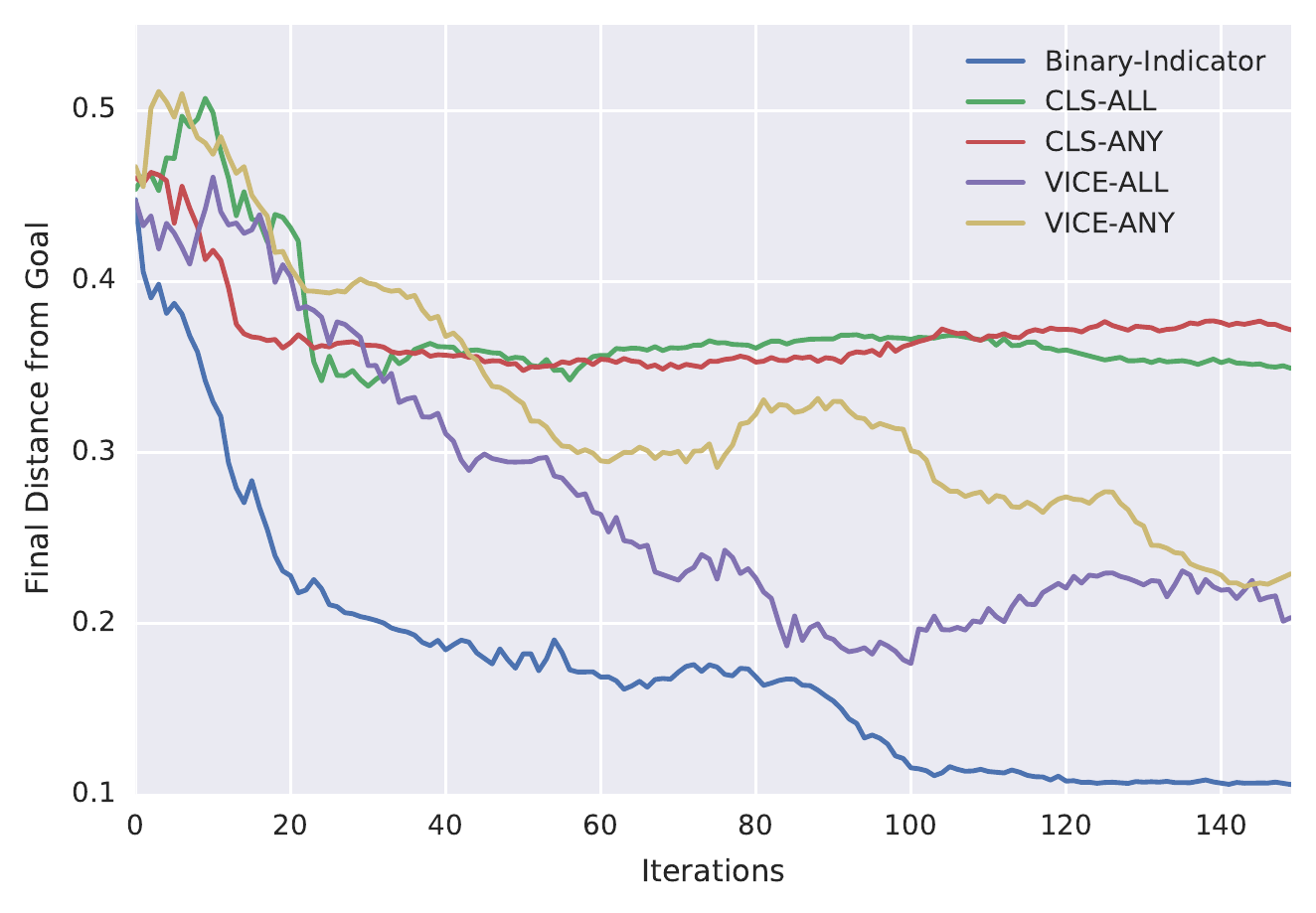}
   \caption{Maze}
   \label{fig:Ng2}
\end{subfigure}
\begin{subfigure}[b]{0.66\textwidth}
  \includegraphics[width=\linewidth]{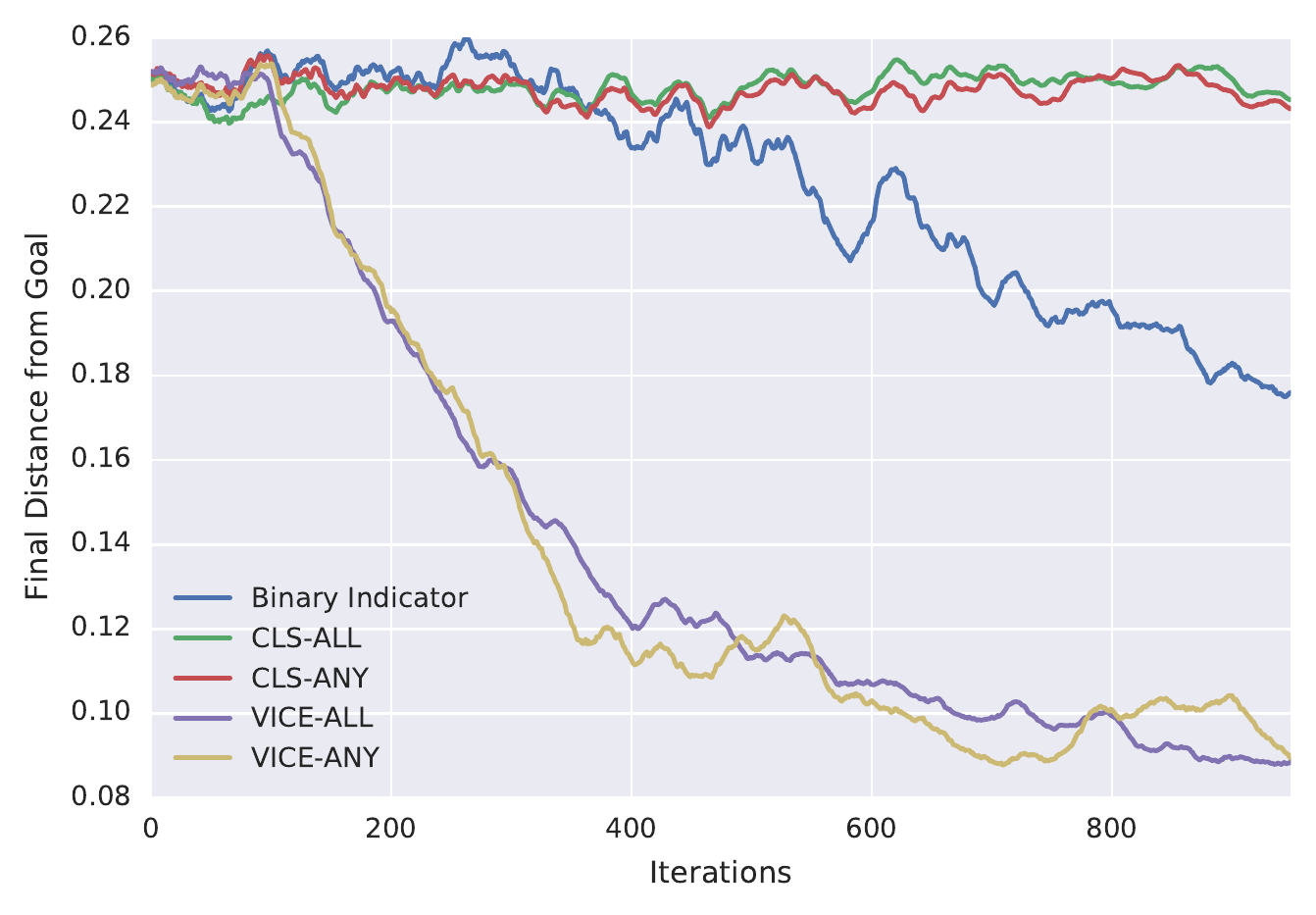}
   \caption{Pusher}
   \label{fig:Ng2}
\end{subfigure}

\caption{Learning curves for the various methods for the \textit{Ant},\textit{Maze}, and \textit{Pusher} tasks, averaged across five random seeds. On all three domains, VICE-ALL and VICE-ANY successfully solve the task consistently, while the naive classifier fails often. Although the binary indicator works reasonably on the Maze task, it fails to solve the task in the more challenging environments. }
\end{figure}

\clearpage
\subsection{Detailed learning curves for learning event probabilities}

\begin{figure}[h]
\centering
\begin{subfigure}[b]{0.3\textwidth}
\includegraphics[width=\textwidth]{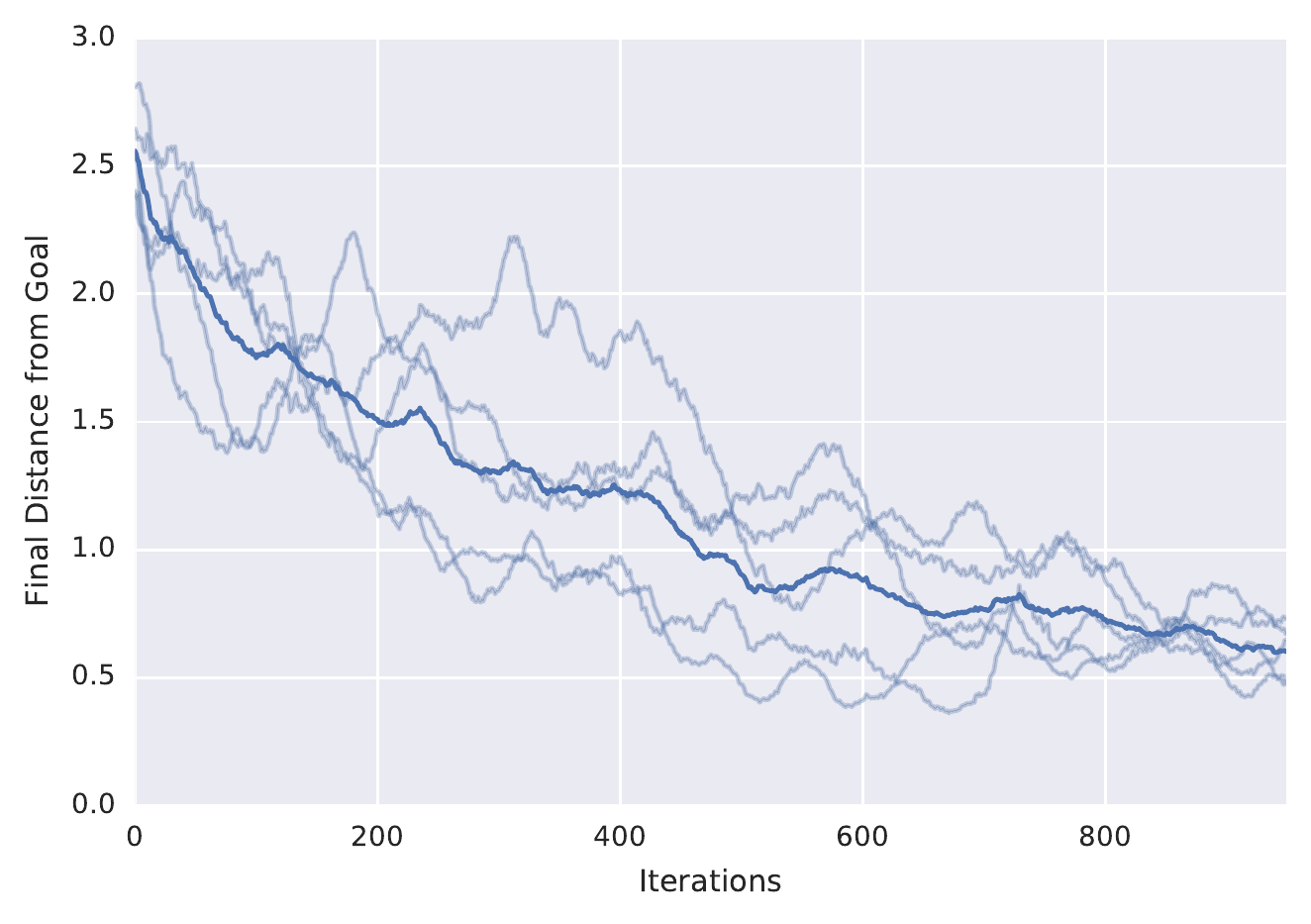}
\caption{Ant - VICE-ALL}
\end{subfigure}
\begin{subfigure}[b]{0.3\textwidth}
\includegraphics[width=\textwidth]{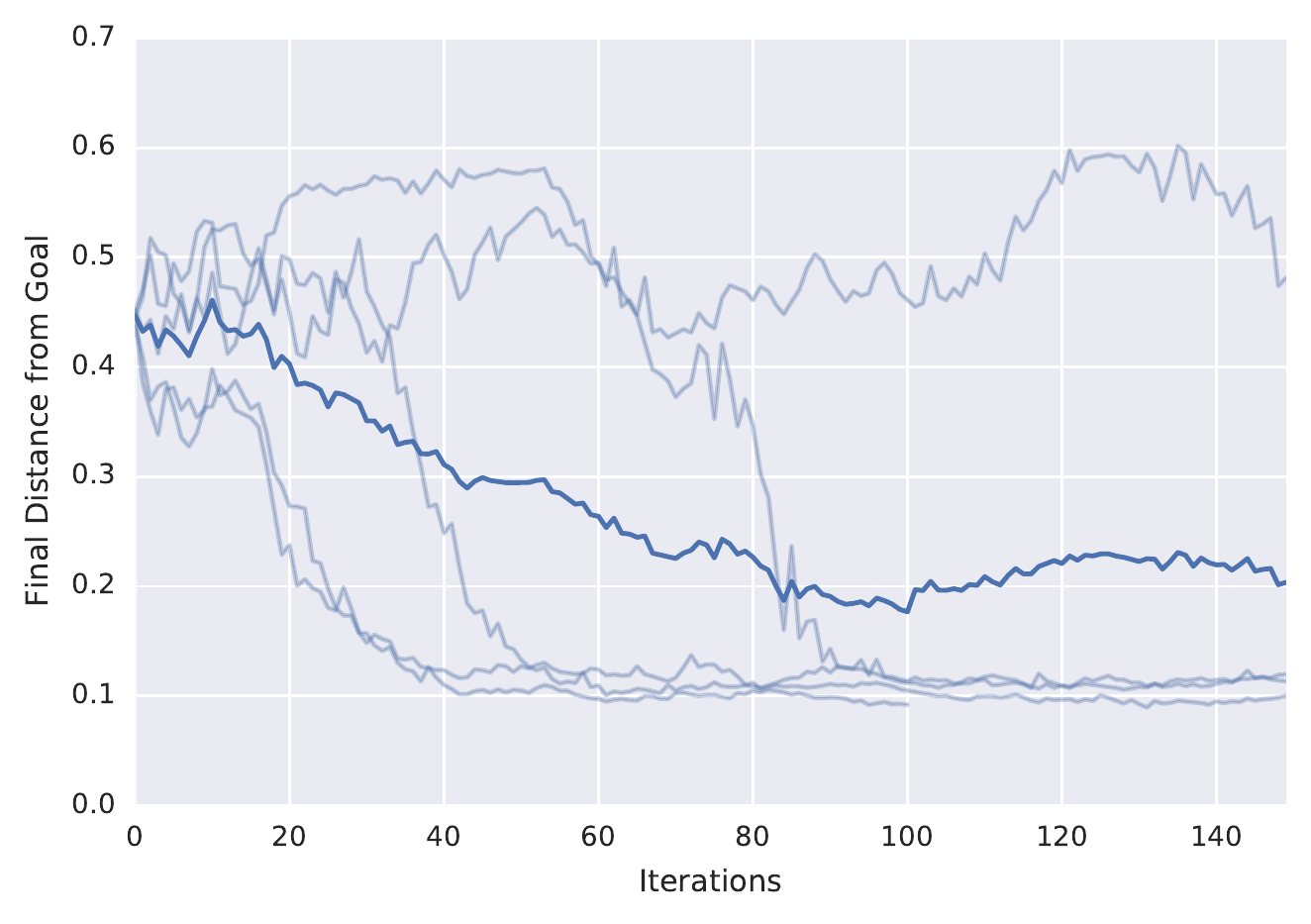}
\caption{Maze - VICE-ALL}
\end{subfigure}
\begin{subfigure}[b]{0.3\textwidth}
\includegraphics[width=\textwidth]{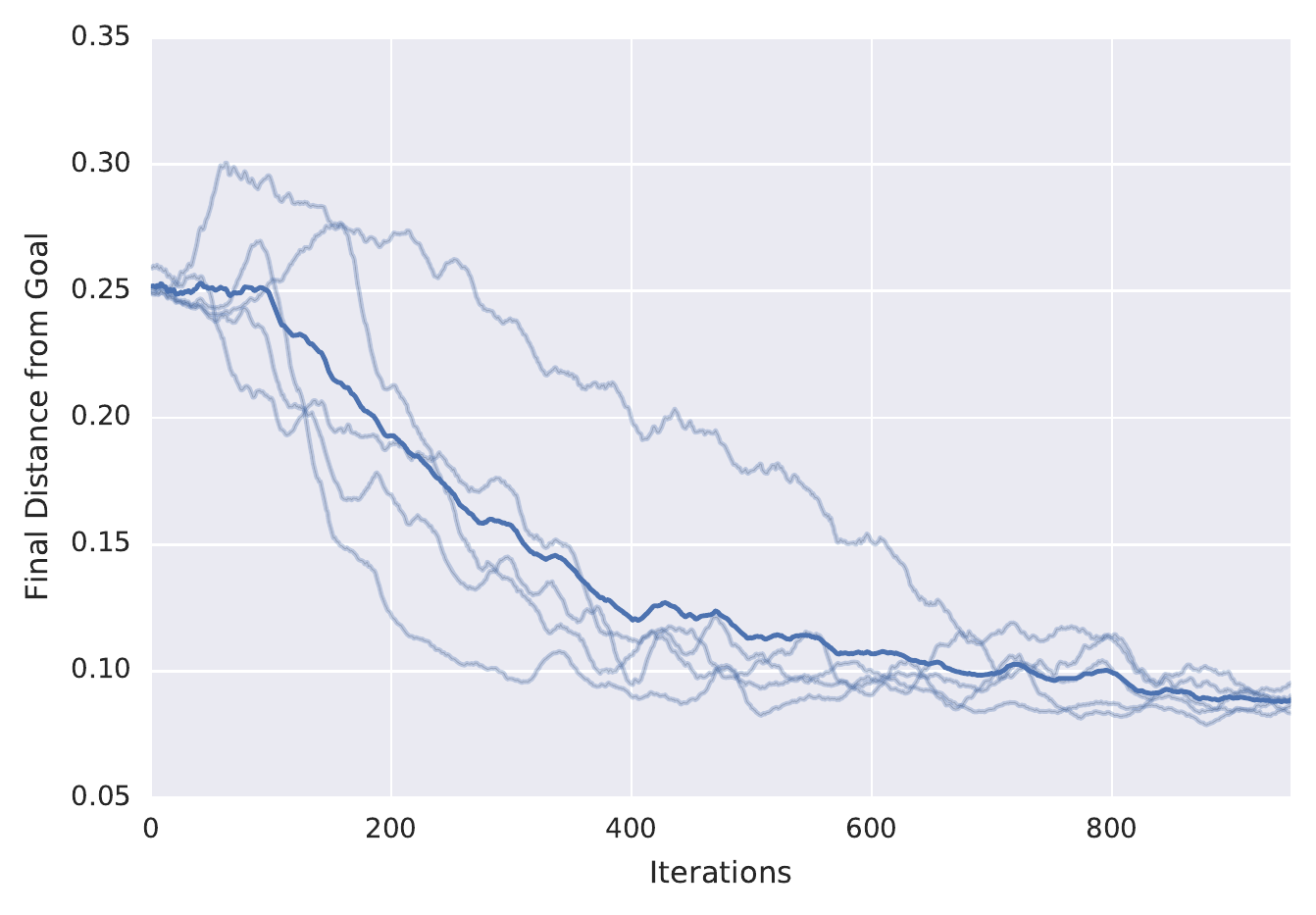}
\caption{Pusher - VICE-ALL}
\end{subfigure}\\
\begin{subfigure}[b]{0.3\textwidth}
\includegraphics[width=\textwidth]{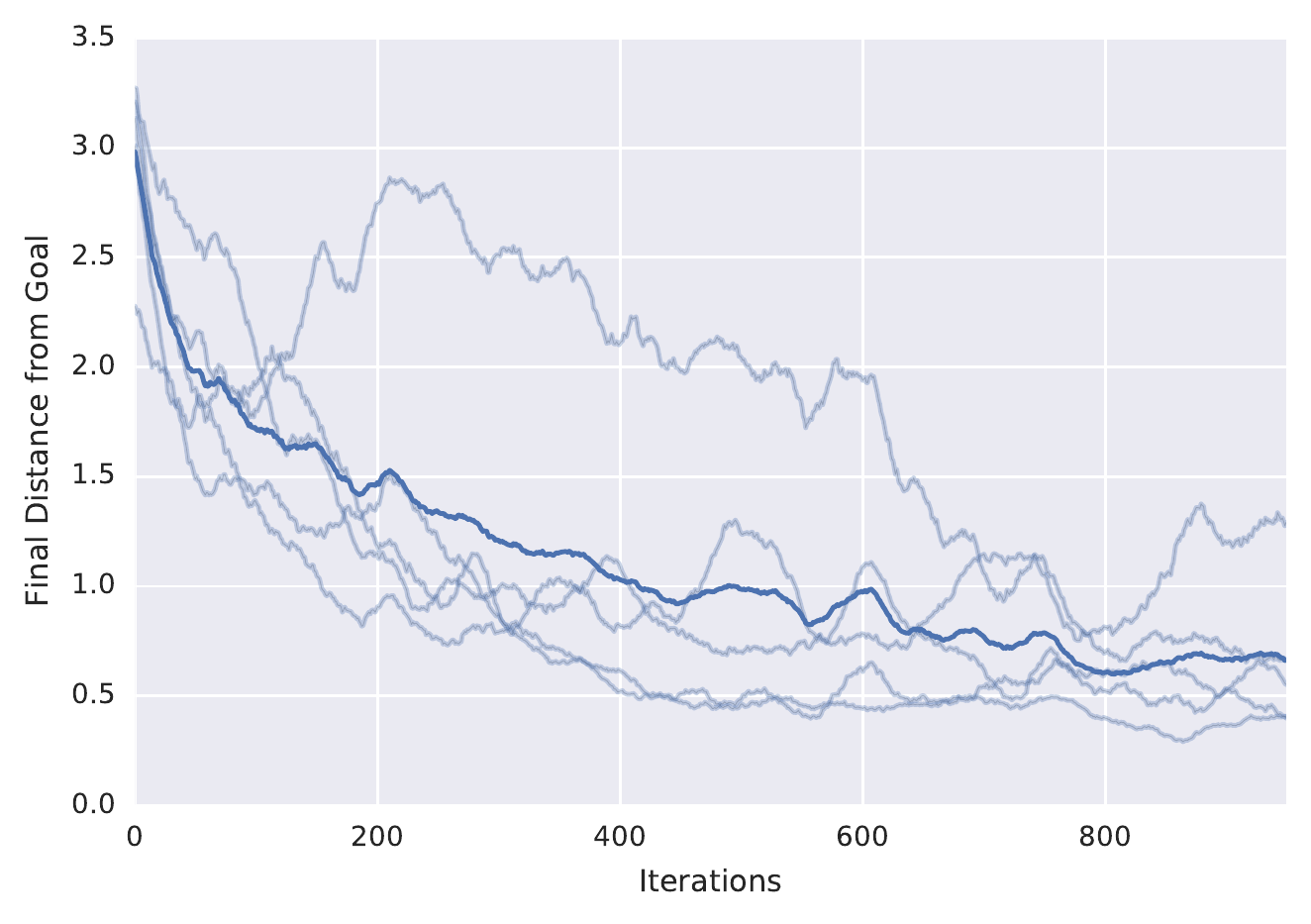}
\caption{Ant - VICE-ANY}
\end{subfigure}
\begin{subfigure}[b]{0.3\textwidth}
\includegraphics[width=\textwidth]{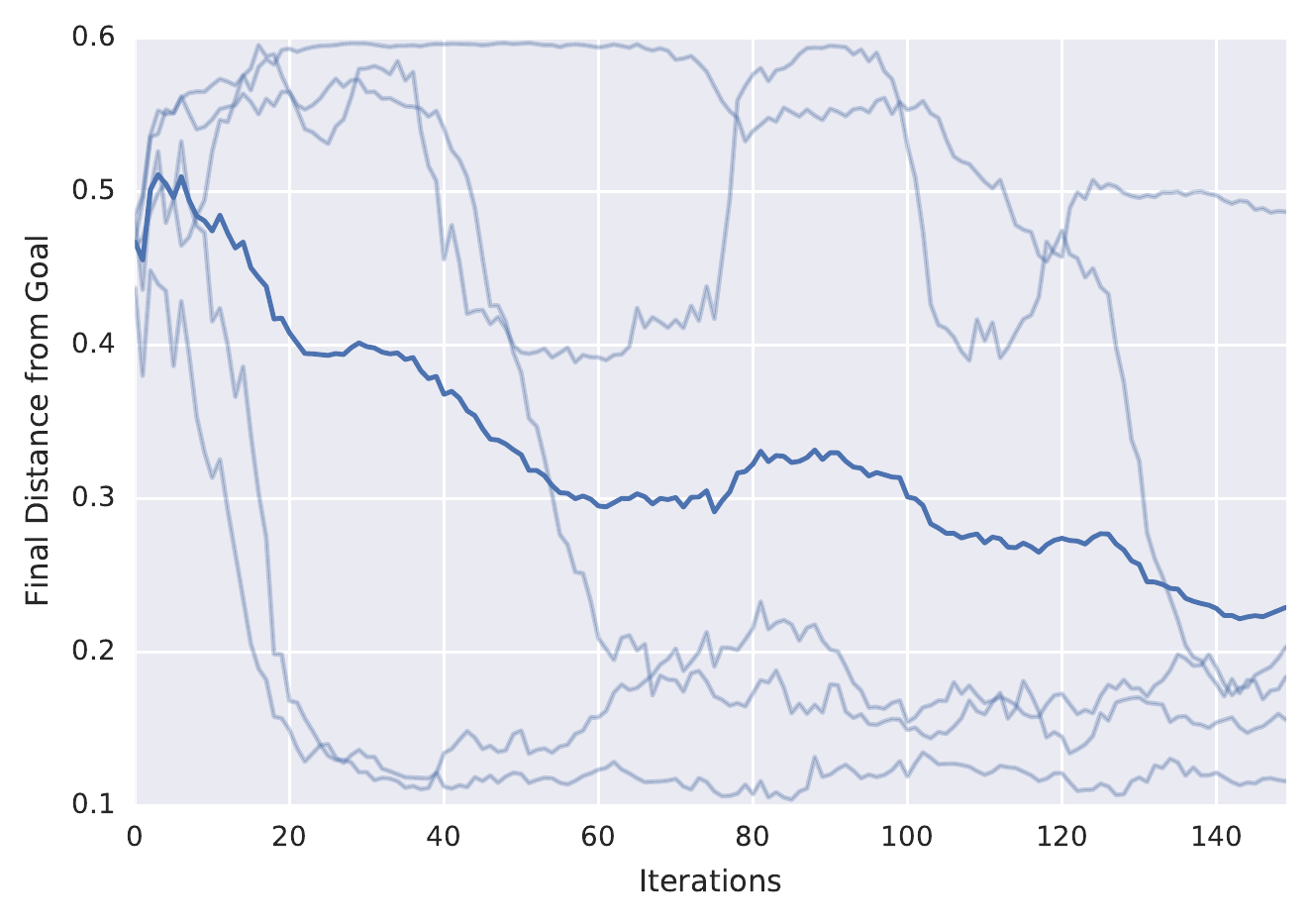}
\caption{Maze - VICE-ANY}
\end{subfigure}
\begin{subfigure}[b]{0.3\textwidth}
\includegraphics[width=\textwidth]{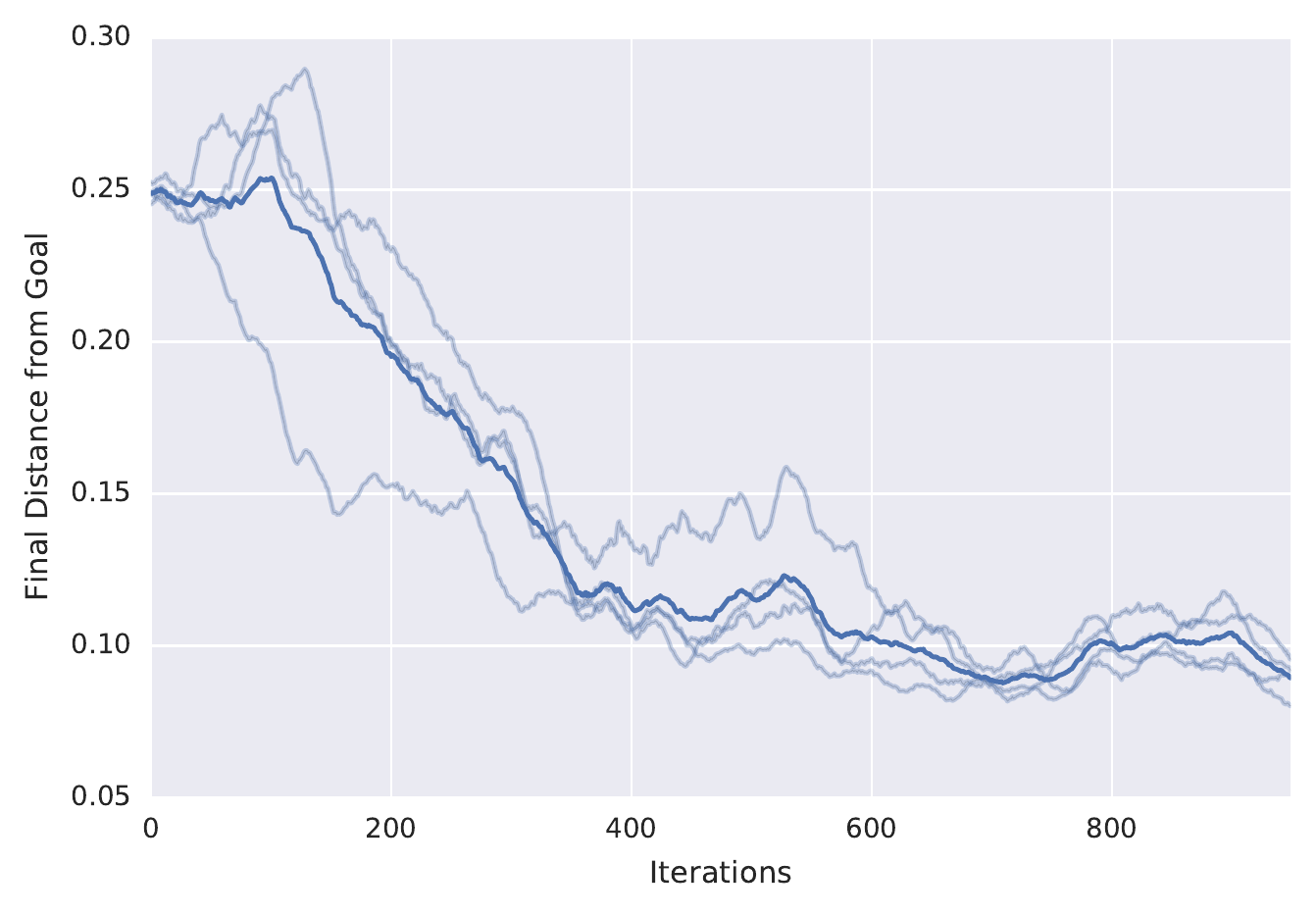}
\caption{Pusher - VICE-ANY}
\end{subfigure}\\
\begin{subfigure}[b]{0.3\textwidth}
\includegraphics[width=\textwidth]{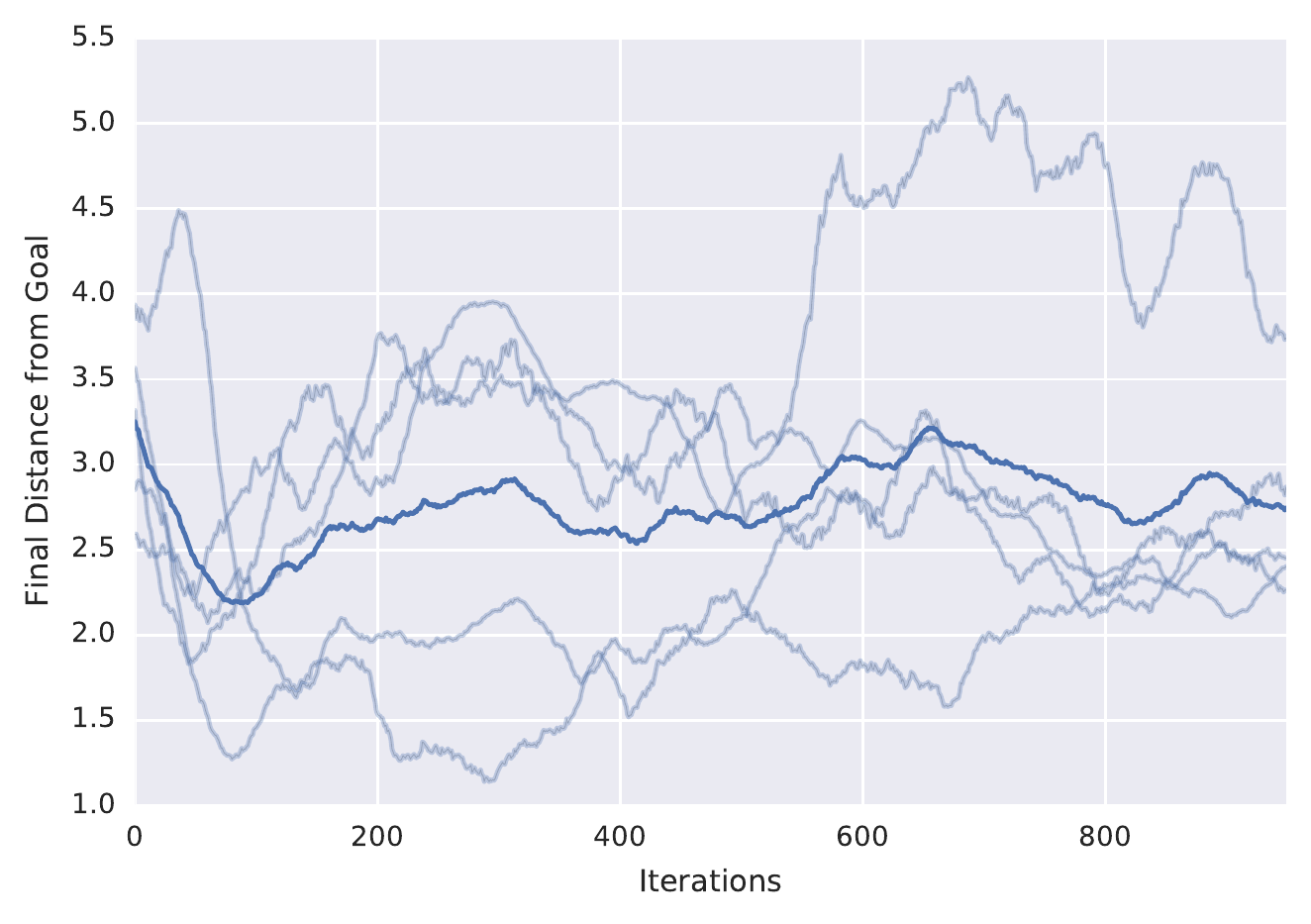}
\caption{Ant - CLS-ALL}
\end{subfigure}
\begin{subfigure}[b]{0.3\textwidth}
\includegraphics[width=\textwidth]{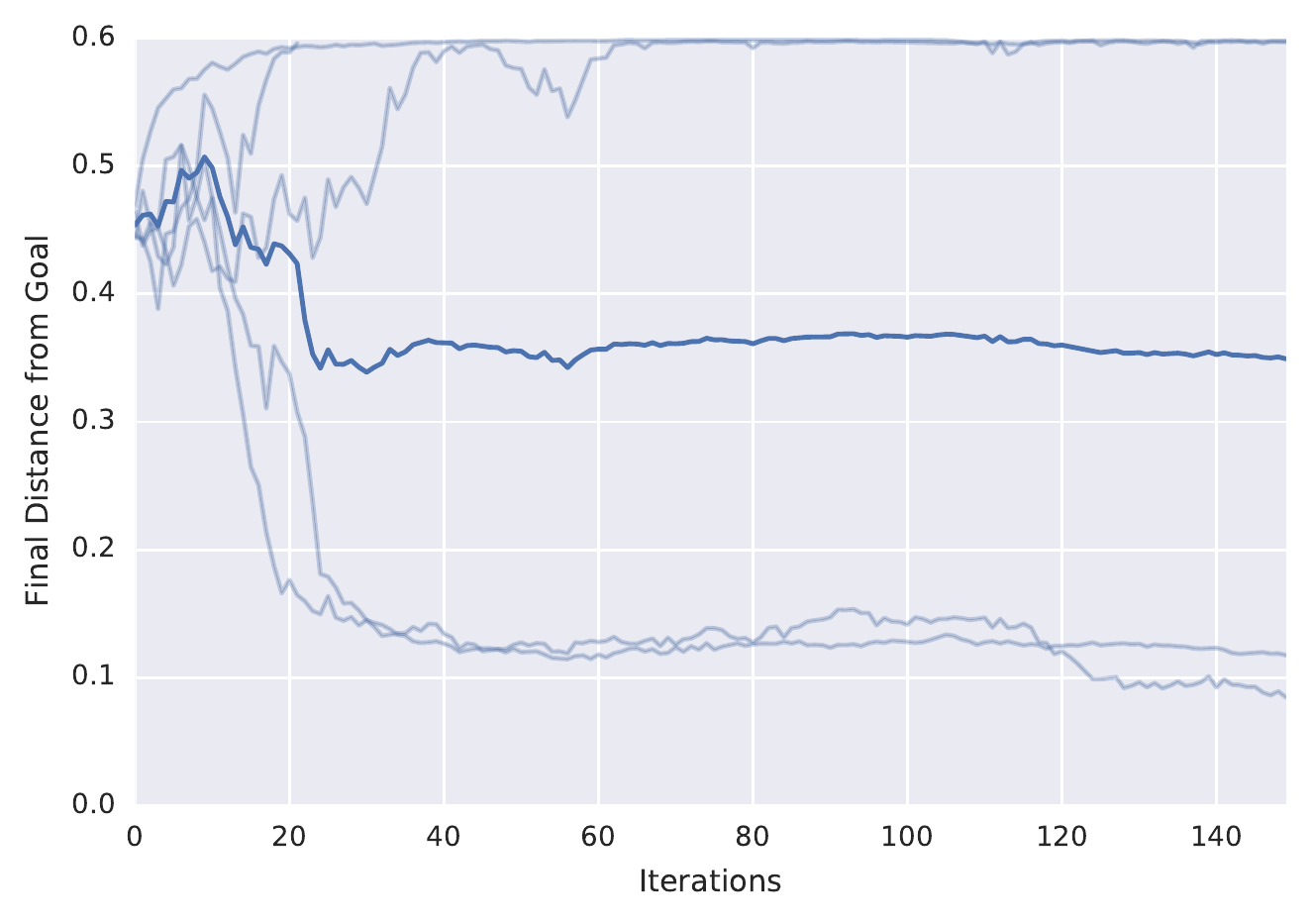}
\caption{Maze - CLS-ALL}
\end{subfigure}
\begin{subfigure}[b]{0.3\textwidth}
\includegraphics[width=\textwidth]{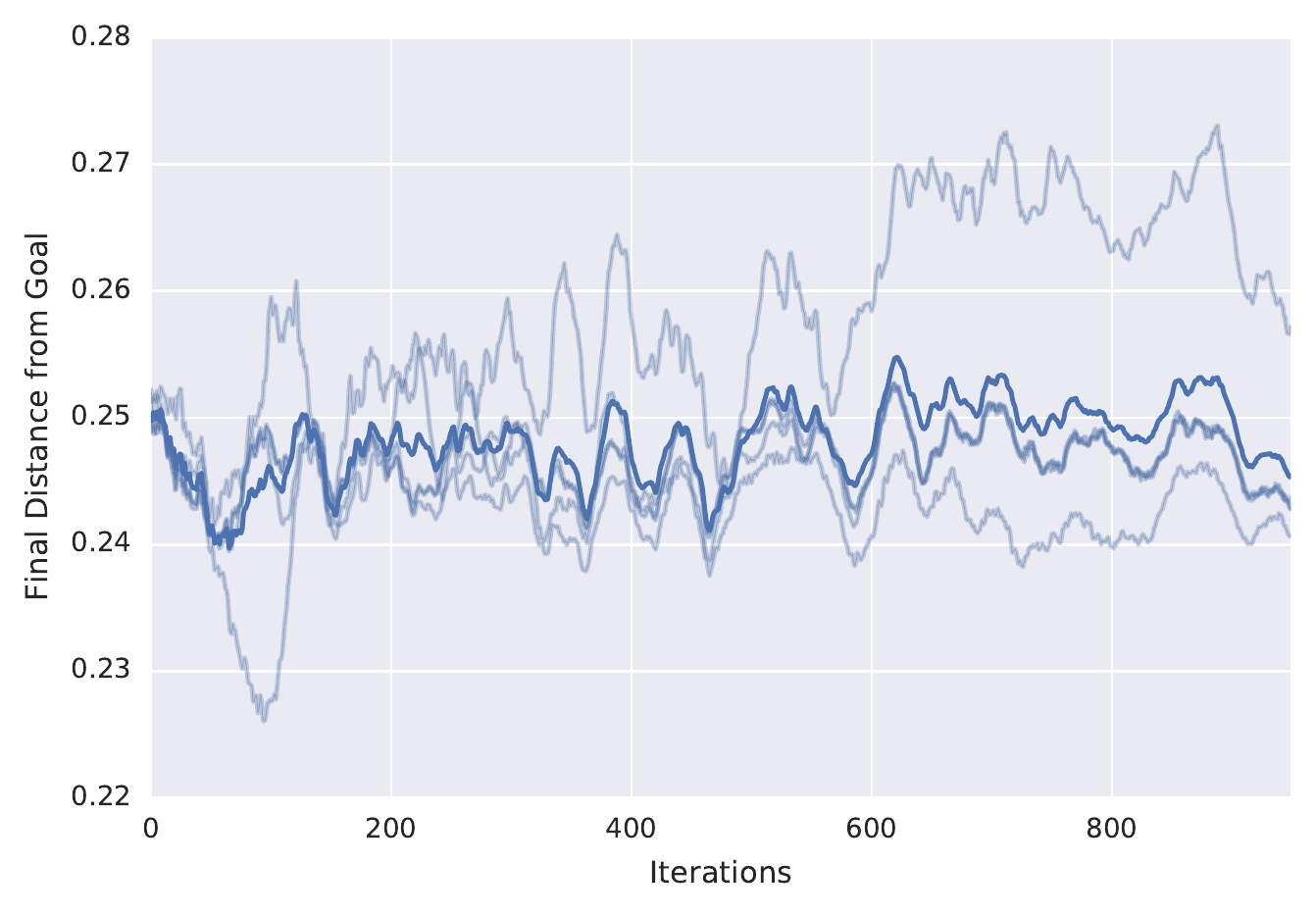}
\caption{Pusher - CLS-ALL}
\end{subfigure}\\
\begin{subfigure}[b]{0.3\textwidth}
\includegraphics[width=\textwidth]{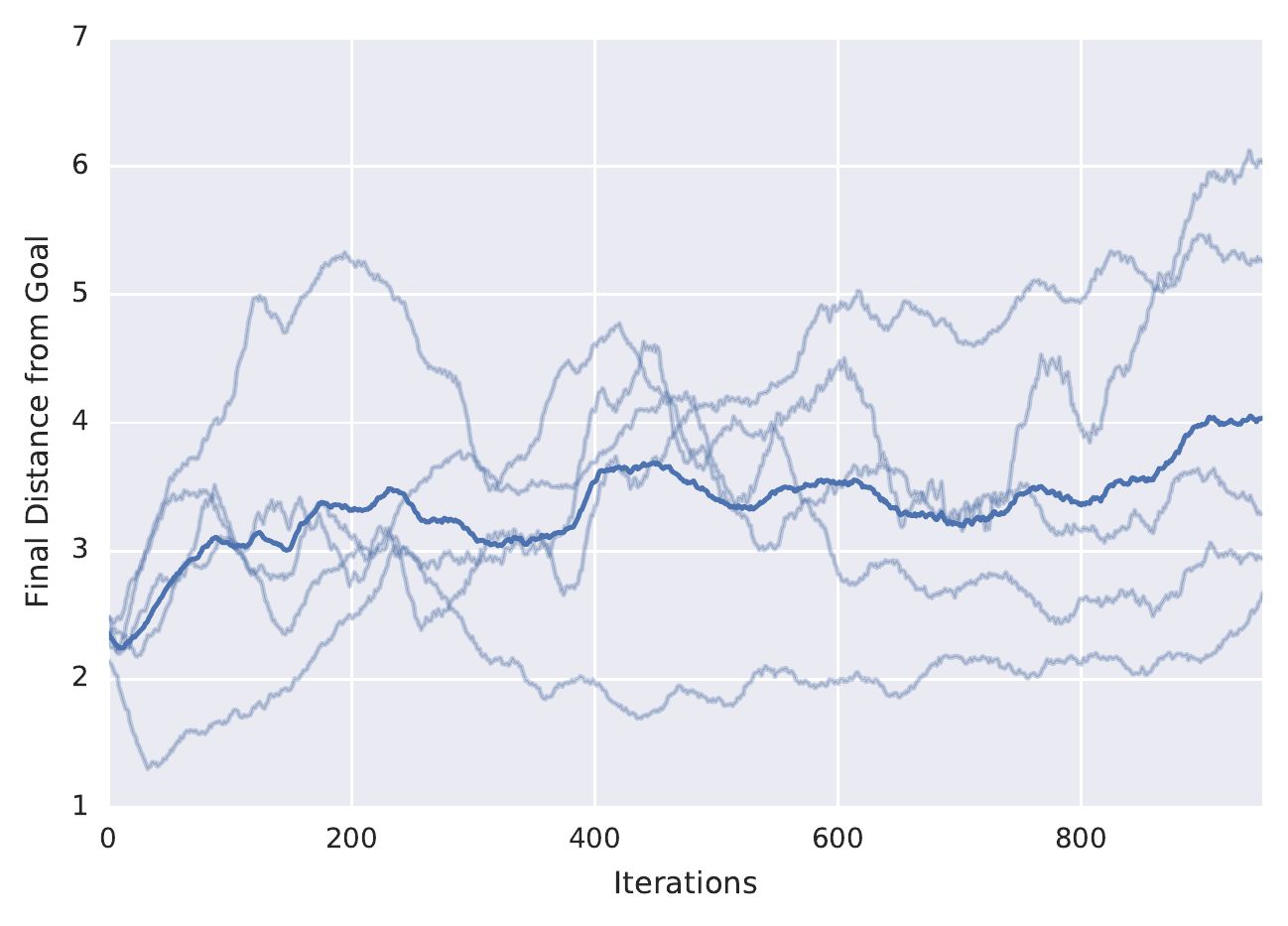}
\caption{Ant - CLS-ANY}
\end{subfigure}
\begin{subfigure}[b]{0.3\textwidth}
\includegraphics[width=\textwidth]{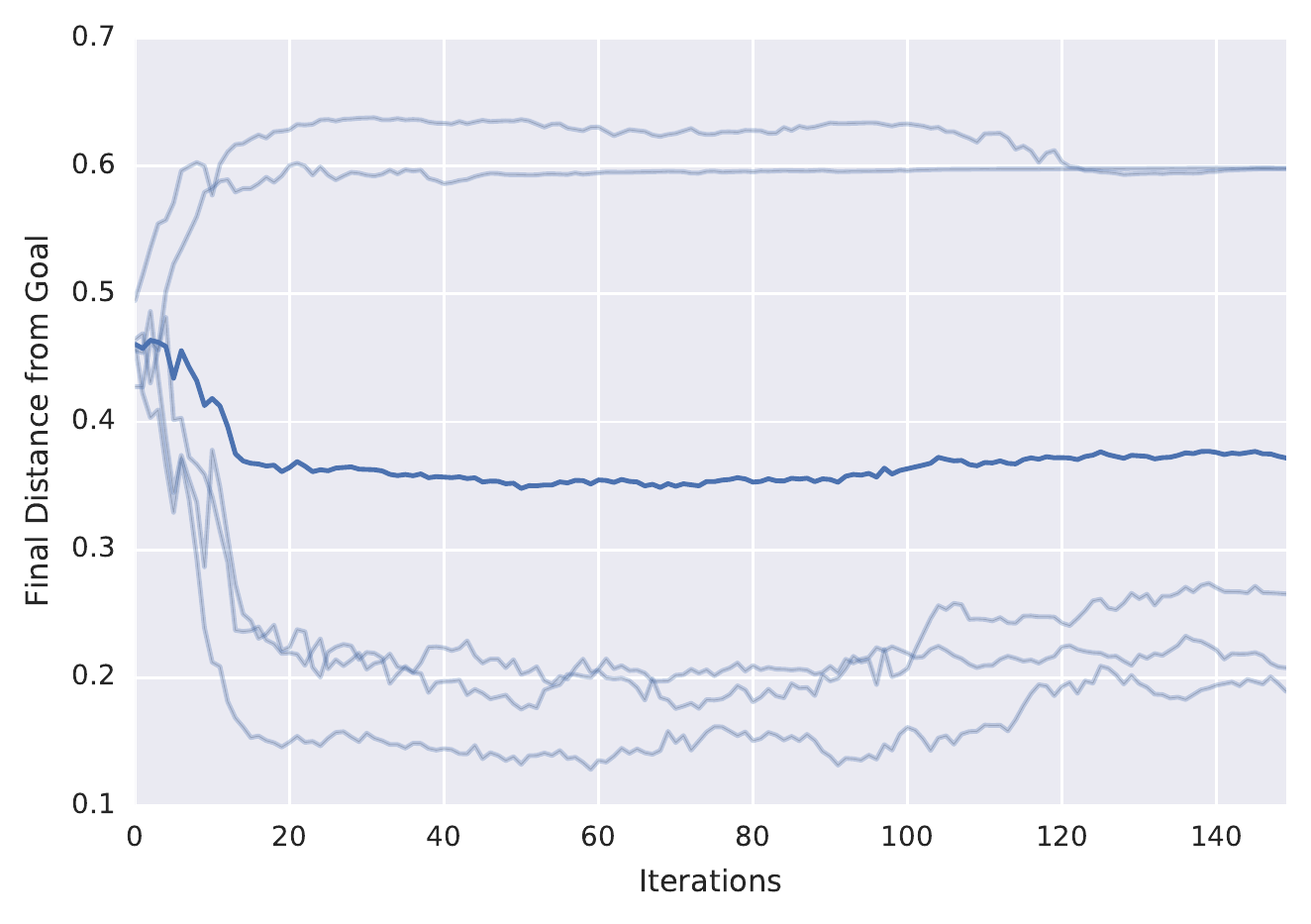}
\caption{Maze - CLS-ANY}
\end{subfigure}
\begin{subfigure}[b]{0.3\textwidth}
\includegraphics[width=\textwidth]{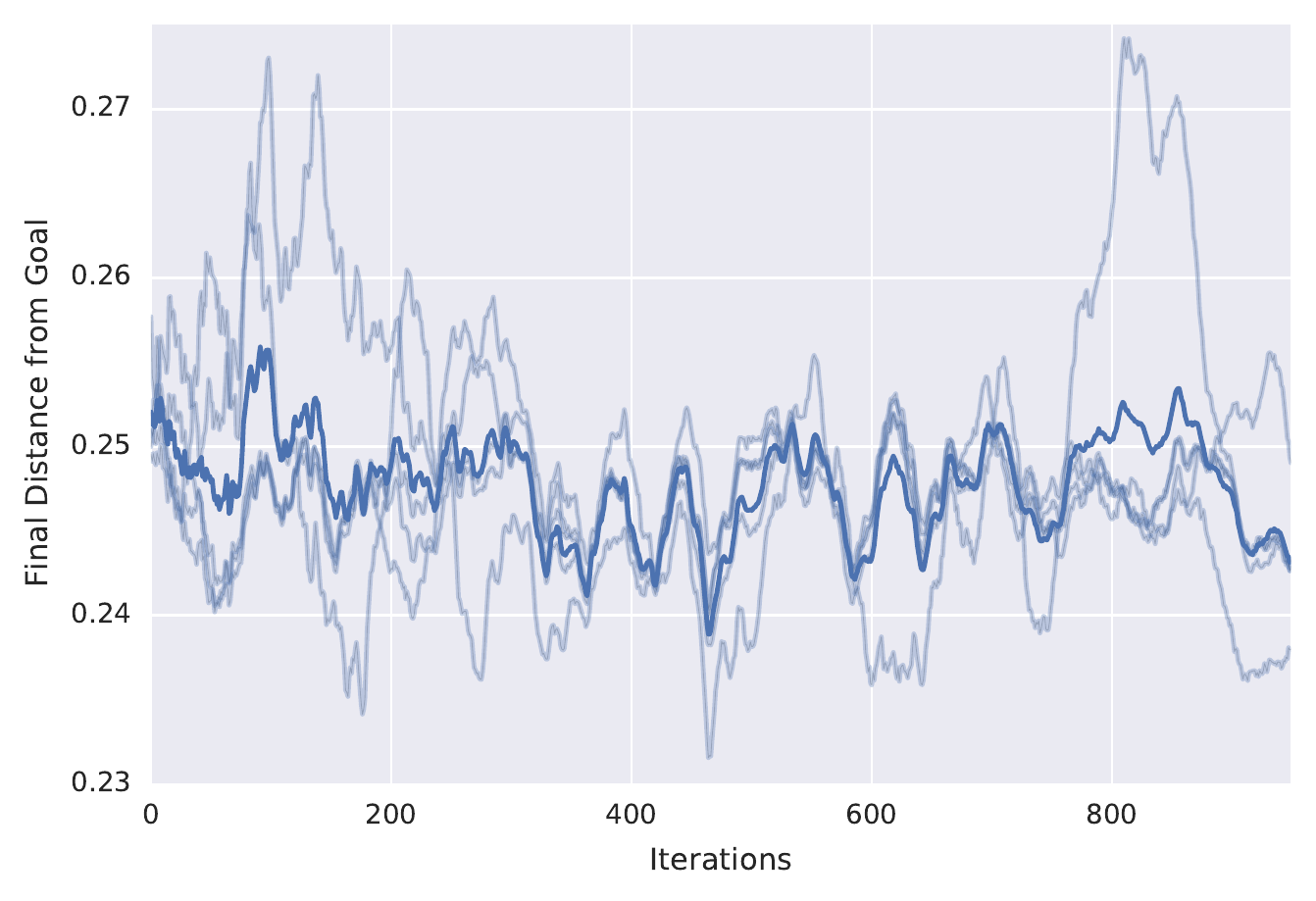}
\caption{Pusher - CLS-ANY}
\end{subfigure}\\
\begin{subfigure}[b]{0.3\textwidth}
\includegraphics[width=\textwidth]{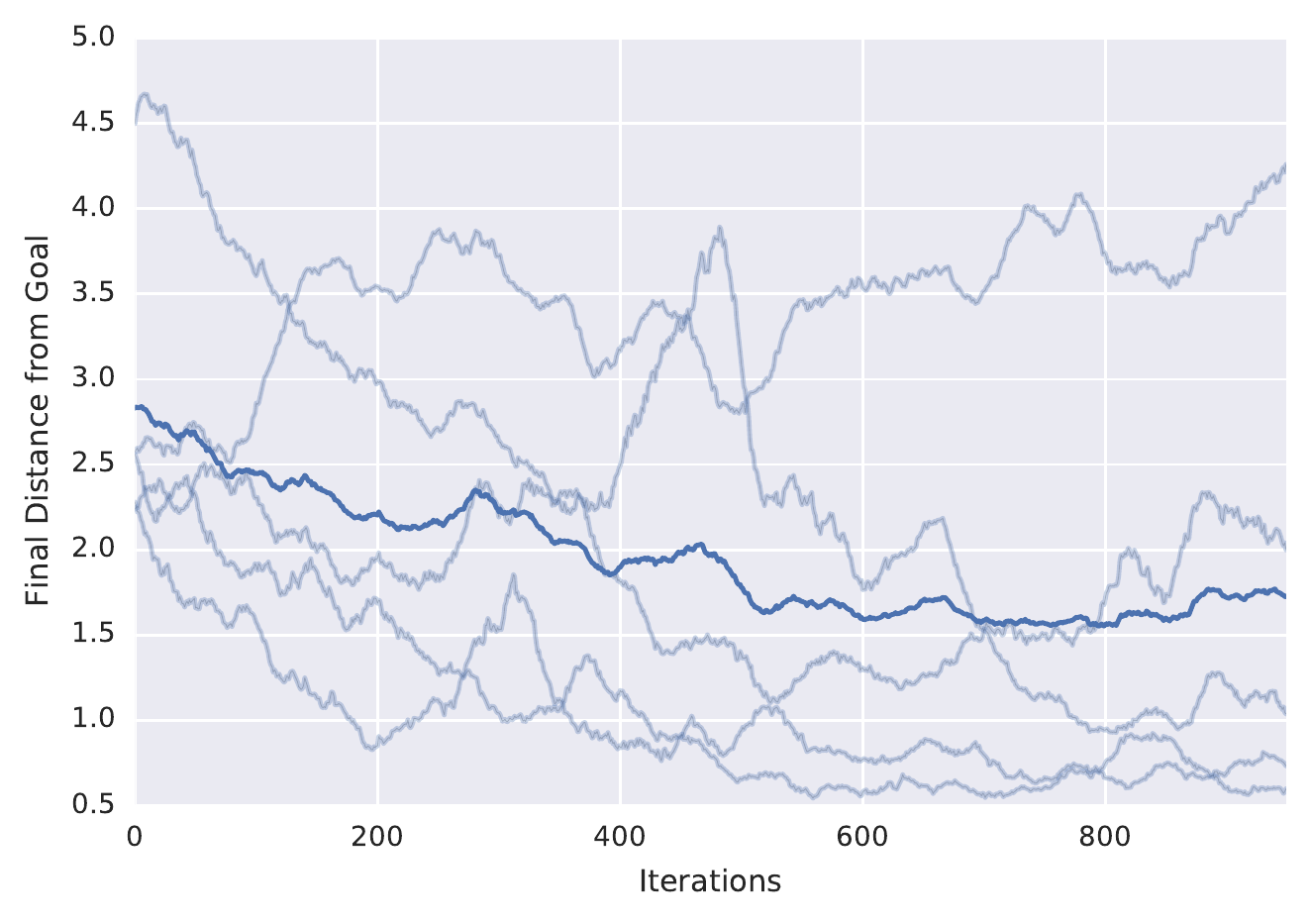}
\caption{Ant - Binary Indicator}
\end{subfigure}
\begin{subfigure}[b]{0.3\textwidth}
\includegraphics[width=\textwidth]{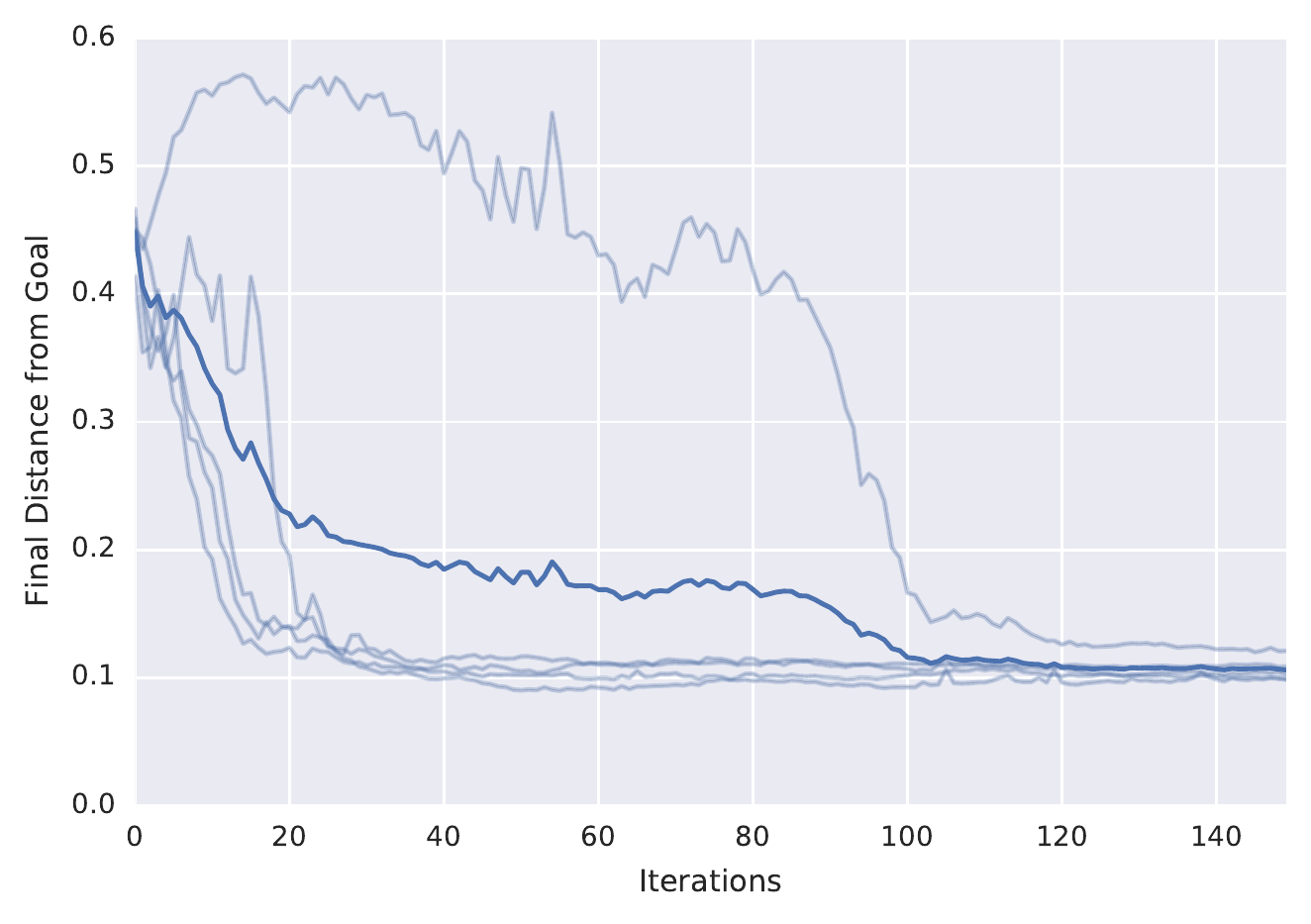}
\caption{Maze - Binary Indicator}
\end{subfigure}
\begin{subfigure}[b]{0.3\textwidth}
\includegraphics[width=\textwidth]{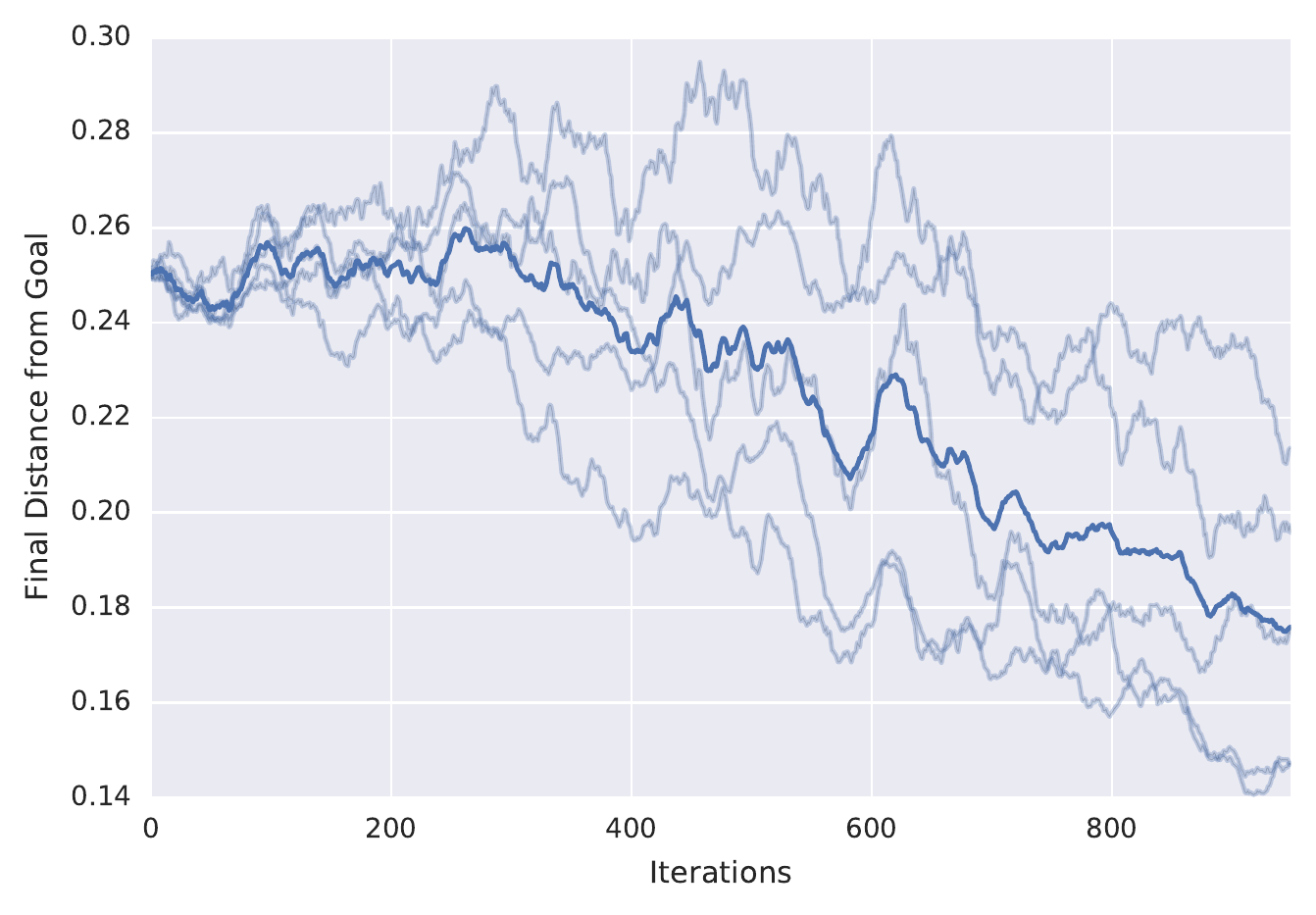}
\caption{Pusher - Binary Indicator}
\end{subfigure}\\
\caption{Learning curves for all methods on each of the five random seeds for the \textit{Ant},\textit{Maze}, and \textit{Pusher} tasks. The mean across the five runs is depicted in bold. }
\end{figure}

\end{document}